\definecolor{lightgreen}{RGB}{220, 255, 220}
\newmdenv[
  backgroundcolor=green!5,
  linecolor=green!60!black,
  linewidth=0.5pt,
  roundcorner=2pt,
  skipabove=5pt,
  skipbelow=5pt,
  innertopmargin=6pt,
  innerbottommargin=6pt,
  innerleftmargin=6pt,
  innerrightmargin=6pt
]{greenbox}
\newmdenv[
  backgroundcolor=orange!10,
  linecolor=orange!80!black,
  linewidth=0.5pt,
  roundcorner=2pt,
  skipabove=5pt,
  skipbelow=5pt,
  innertopmargin=6pt,
  innerbottommargin=6pt,
  innerleftmargin=6pt,
  innerrightmargin=6pt
]{orangebox}
\newmdenv[
  backgroundcolor=blue!10,
  linecolor=blue!80!black,
  linewidth=0.5pt,
  roundcorner=2pt,
  skipabove=5pt,
  skipbelow=5pt,
  innertopmargin=6pt,
  innerbottommargin=6pt,
  innerleftmargin=6pt,
  innerrightmargin=6pt
]{bluebox}
\definecolor{mygrey}{RGB}{230, 230, 230}
\definecolor{mygreen}{RGB}{210, 255, 210}
\definecolor{myorange}{RGB}{255, 240, 200}
\theoremstyle{plain}
\newtheorem{theorem}{Theorem}[section]
\newtheorem{corollary}[theorem]{Corollary}
\theoremstyle{definition}
\newtheorem{remark}[theorem]{Remark}
\newcommand\R{\mathbb{R}}
\title{Rethinking Attention: Polynomial Alternatives to Softmax in Transformers}
\author{
Hemanth Saratchandran \\
\texttt{hemanth.saratchandran@adelaide.edu.au}
\And
Jianqiao Zheng \\
\texttt{jianqiao.zheng@adelaide.edu.au}
\And
Yiping Ji \\
\texttt{yiping.ji@adelaide.edu.au}
\And
Wenbo Zhang \\
\texttt{wenbo.zhang@adelaide.edu.au}
\And
Simon Lucey \\
\texttt{simon.lucey@adelaide.edu.au}
\\[1ex]
\centerline{Australian Institute for Machine Learning, University of Adelaide}
}
\begin{document}

\maketitle

\begin{abstract}
This paper questions whether the strong performance of softmax attention in transformers stems from producing a probability distribution over inputs. Instead, we argue that softmax’s effectiveness lies in its implicit regularization of the Frobenius norm of the attention matrix, which stabilizes training. Motivated by this, we explore alternative activations, specifically polynomials, that achieve a similar regularization effect. Our theoretical analysis shows that certain polynomials can serve as effective substitutes for softmax, achieving strong performance across transformer applications despite violating softmax’s typical properties of positivity, normalization, and sparsity. Extensive experiments support these findings, offering a new perspective on attention mechanisms.
\end{abstract}

\section{Introduction}

Transformer architectures \citep{vaswani2017attention} have become the foundation of state-of-the-art models across natural language processing (NLP) \citep{vaswani2017attention, devlin2018bert, zhuang2021robustly, zhen2022cosformer}, computer vision \citep{dosovitskiy2020image, carion2020end, liu2021swin, touvron2021training}, and robotics \citep{fu2024drive, maiti2023transfusion, salzmann2020trajectron++}. A core component of these models is the softmax attention block, which computes token interactions by evaluating the relative importance of inputs. This mechanism enables transformers to effectively model long-range dependencies, a capability that distinguishes them from recurrent neural networks (RNNs) and convolutional neural networks (CNNs), particularly in scaling to large datasets.

Softmax self-attention satisfies three key properties: (1) non-negativity of attention weights, (2) row-wise normalization to ensure weights sum to one (interpretable as probabilities), and (3) sparsity, promoting focus on a few relevant tokens. These properties are commonly believed to be essential for modeling effective attention \cite{bahdanau2014neural, zhen2022cosformer}, facilitating interpretability. However, this viewpoint has largely been adopted on empirical grounds, with limited theoretical justification. Despite investigations into alternative activations \citep{shen2023study, fang2022transformers, correia2019adaptively}, softmax remains dominant, primarily due to its empirical performance and interpretability.

In this paper, we challenge this prevailing assumption and ask:

\begin{greenbox}
Do attention mechanisms in transformers require non-negativity, normalization, and sparsity for effective performance?
\end{greenbox}

We present a new perspective suggesting that softmax's effectiveness stems not inherently from these properties, but from its implicit regularization of the attention matrix's Frobenius norm during training. Building on this insight, we show that simple polynomial activations, which violate non-negativity, normalization, and sparsity, can nevertheless provide comparable or superior performance to softmax by inducing a similar regularization effect. This offers a fundamentally different interpretation of attention, distinct from the conventional probabilistic view of softmax.

Our contributions are as follows:
\begin{itemize}
\item[1.] We provide a theoretical analysis showing that softmax attention implicitly regularizes the Frobenius norm of the attention matrix, challenging the assumption that non-negativity, normalization, and sparsity are the primary drivers of its success.
\item[2.] We propose polynomial activations as an alternative to softmax, demonstrating that they can induce similar regularization effects without adhering to the traditional softmax constraints, achieving competitive performance across tasks including image classification, object detection, instance segmentation, text classification, and physics-based modeling.
\end{itemize}

By revisiting foundational assumptions, our work offers deeper insights into the mechanics of attention and opens new avenues for designing alternative attention mechanisms.

\section{Related Work}
\paragraph{Attention activations.}
A variety of alternative activations for attention mechanisms have been explored in recent literature. \cite{shen2023study} showed that ReLU activations outperform softmax in long-sequence tasks, such as document translation. Of particular relevance to our work, \cite{wortsman2023replacing} demonstrated that scaling ReLU by the inverse of sequence length can surpass softmax in certain vision applications, emphasizing the importance of correct activation scaling. In this paper, we show that softmax inherently applies such a scale through its normalization, and we derive theoretical principles that motivate polynomial activations with scalings proportional to the square root of the sequence length. Other studies have proposed alternatives with varying motivations. \cite{banerjee2020exploring} used Taylor series approximations of softmax, achieving superior performance in image classification. \cite{wang2021escaping} introduced periodic activations to improve gradient flow in attention layers. \cite{koohpayegani2024sima} showed that $l^1$ normalization applied to linear attention mechanisms can yield on par performance to softmax's on three distinct vision transformers. Distinct from these works, our approach establishes a clear theoretical link between the Frobenius norm of the attention matrix and the input sequence length. Leveraging this insight, we design polynomial activations that break three canonical properties of softmax—non-negativity, row normalization, and sparsity—while still achieving competitive performance.

\paragraph{Attention mechanisms.} Numerous strategies have been proposed to improve the efficiency and scalability of transformers by reducing computational overhead and rethinking attention mechanisms. The Data-Efficient Image Transformer (DeiT) \cite{touvron2021training} leverages distillation tokens to achieve competitive performance without relying on large datasets. The Cross-Covariance Image Transformer (XCiT) \cite{ali2021xcit} introduces cross-covariance attention, enabling efficient spatial interactions with reduced complexity. The Swin Transformer \cite{liu2021swin} employs a hierarchical architecture with shifted window-based self-attention to enhance scalability for vision tasks. The Nystr\"{o}mformer \cite{xiong2021nystromformer} approximates full self-attention using the Nystr\"{o}m method, reducing its complexity from quadratic to near-linear. Similarly, the MLP-Mixer \cite{tolstikhin2021mlp} replaces self-attention entirely with multi-layer perceptrons for spatial and channel mixing. In this work, we demonstrate that our polynomial-based attention activations can be integrated into these architectures \cite{touvron2021training,ali2021xcit,liu2021swin,xiong2021nystromformer}, matching softmax performance while violating its typical properties of non-negativity, normalization, and sparsity.

\section{Preliminaries and Notation}\label{sec:prelims}

In this section we outline the definition of a transformer via the transformer block and set the notation of various mathematical quantities we will be using in future sections. For more details on transformers the reader can consult \cite{vaswani2017attention, dosovitskiy2020image}.

Transformer architectures comprise of transformer blocks, defined as follows. A transformer block is a mapping 
$\mathbf{T}: \R^{N\times D} \rightarrow \R^{N\times D}$ defined as
\begin{equation}\label{eqn:trans_main}
    \mathbf{T}(x) = \mathbf{F}(\mathbf{A}(x) + x)
\end{equation}
where $\mathbf{F}$ is a feedforward MLP with a residual connection and $\mathbf{A}$ is an attention head. 

The attention head $\mathbf{A}$ is defined as follows: It comprises of three learnable matrices, a query ($q$), key ($k$) and value ($v$) defined by: $q = QX$, $k = KX$, $v = VX$ for an input sequence $X \in \R^{N\times D}$ with 
$Q$, $K \in \R^{D\times d}$ and $V \in \R^{D\times M}$. The
attention head $\mathbf{A}(X)$ is then defined by
\begin{equation}\label{eqn:attn_eqn_general}
    \mathbf{A}(X) = \phi(\mathcal{S}(q,k))v
\end{equation}
where $\mathcal{S}$ is a similarity transformation and $\phi$ is an activation function. The most common used $\mathcal{S}$ is the dot-product: 
$\mathcal{S}(q,v) = qk^T$, known as self-attention, and will be the one we focus on in this paper. The most common activation function $\phi$ that is used by authors is softmax. This leads to the most common form of the attention head given by
\begin{equation}\label{eqn:softmax_attn}
\begin{split}
\mathbf{A}(X) &= \mathbf{softmax}\bigg(\frac{qk^T}{\sqrt{d}}\bigg)v \\
&= \mathbf{softmax}\bigg(\frac{XQK^TX^T}{\sqrt{d}}\bigg)XV.
\end{split}
\end{equation}

The function $\mathbf{softmax}$ is the matrix softmax map that applies the usual softmax function row-wise:
\begin{align}\label{eqn:softmax_matrix}
&\mathbf{softmax}\bigg{(}
\begin{bmatrix}
x_{11} & \cdots & x_{1n}\\
\vdots & \vdots & \vdots \\
x_{n1} & \cdots & x_{nn}
\end{bmatrix}
\bigg{)} 
=
\begin{bmatrix}
\frac{e^{x_{11}}}{\sum_{j=1}^ne^{x_{1j}}} & \cdots & 
\frac{e^{x_{1n}}}{\sum_{j=1}^ne^{x_{1j}}}\\
\vdots & \vdots & \vdots \\
\frac{e^{x_{n1}}}{\sum_{j=1}^ne^{x_{nj}}} & \cdots & 
\frac{e^{x_{nn}}}{\sum_{j=1}^ne^{x_{nj}}}
\end{bmatrix}
\end{align}
The factor $\frac{1}{\sqrt{d}}$, as explained in \cite{vaswani2017attention}, is a scaling to prevent the gradients of softmax from being too small. For the theoretical analysis in this paper we will only use the dot-product similarity $qk^T$ and call the $N \times N$ matrix $softmax(qk^T)$ the 
\textit{softmax self-attention matrix}. In the experiments, 
\cref{sec:exps}, we will empirically validate our theoretical framework on more general softmax attention blocks used in state of the art transformers such as DeiT \cite{touvron2021training}, Swin Transformer \cite{liu2021swin} and XciT \cite{xiong2021nystromformer}.

For general transformer architectures, multiple heads 
$\mathbf{A}_i$ for $1 \leq i \leq n$ are used. Each attention head is defined by \eqref{eqn:softmax_attn} and then all outputs of each attention head are concatenated together before going into the feedforward layer. 

We will need notation for the derivative of the matrix softmax map defined by \eqref{eqn:softmax_matrix}. Given a matrix $A \in \R^{N\times N}$ we can differentiate the matrix 
map $\mathbf{softmax}$ at $A$ and obtain the gradient linear map
$\mathbf{\nabla softmax}(A) : \R^{N\times N} \rightarrow \R^{N\times N}$ that is defined by the formula
\begin{equation}\label{eqn:softmax_derivative}
    \mathbf{\nabla softmax}(A) := \mathbf{Jsoftmax}(A)^T
\end{equation}
where $\mathbf{Jsoftmax}(A)$ is the Jacobian of $\mathbf{softmax}$ at $A$.

Given a matrix $A \in \mathbb{R}^{n \times m}$, we denote its Frobenius norm by $||A||_F$. Additionally, we use the notation $\mathbb{E}$ to represent the expectation of a random variable, where the specific random variable being considered will be clear from the context.

\section{Theoretical Analysis}\label{sec:theory}

\subsection{Implicit regulatization of Softmax}\label{subsec;sm_implicit}

This section presents a theoretical result showing that the softmax activation imposes control over the Frobenius norm of the self-attention matrix in a way that grows sub-linearly with the input sequence's token length. Additionally, we demonstrate that the gradient of the softmax with respect to the self-attention matrix also exhibits a similar degree of regularity. While previous work has analyzed the regularity of softmax self-attention through the lens of the Lipschitz constant \citep{kim2021lipschitz, castin2023understanding}, our theorem offers a novel perspective by directly linking the Frobenius norm regularity to the token length. This provides insights into how self-attention activations should scale with token length to maintain stability during training, especially with gradient descent-based algorithms.

\begin{bluebox}
\begin{theorem}\label{thm:softmax_regularity}
Let $\mathbf{softmax} : \R^{N\times N} \rightarrow 
\R^{N\times N}$ be the matrix softmax map defined by  
\eqref{eqn:softmax_matrix} and let 
$\mathbf{\nabla softmax}(A) : \R^{N\times N} \rightarrow 
\R^{N\times N}$ denote the gradient of 
$\mathbf{softmax}$ at $A \in \R^{N\times N}$. We then have the following bounds on the Frobenius norms
\begin{align}
 ||\mathbf{softmax}(A)||_F &\leq \sqrt{N} \\
   ||\mathbf{\nabla softmax}(A)||_F &\leq 
   2\sqrt{N}.
\end{align}
\end{theorem}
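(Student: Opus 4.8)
The plan is to reduce both estimates to statements about the ordinary vector softmax applied to a single row, using that $\mathbf{softmax}$ acts independently row by row on its matrix argument. For the first bound, write $p_i \in \R^N$ for the $i$-th row of $\mathbf{softmax}(A)$; by construction $p_i$ has non-negative entries summing to one, so $\|p_i\|_2^2 = \sum_j p_{ij}^2 \le \big(\sum_j p_{ij}\big)^2 = 1$, since the cross terms are non-negative. Summing over the $N$ rows gives $\|\mathbf{softmax}(A)\|_F^2 = \sum_{i=1}^N \|p_i\|_2^2 \le N$, which is the claimed inequality.

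For the second bound, I would first invoke the standard formula for the Jacobian of the vector softmax: if $p = \mathrm{softmax}(x)$, then $\partial p/\partial x = \mathrm{diag}(p) - p p^T$. Since $\mathbf{softmax}$ is applied row-wise, the linear map $\mathbf{\nabla softmax}(A)$ — equivalently $\mathbf{Jsoftmax}(A)$, as transposition preserves the Frobenius norm — is block diagonal with respect to the splitting of $\R^{N\times N}$ into its $N$ rows: the block acting on row $i$ is $J_i := \mathrm{diag}(p_i) - p_i p_i^T$, and every cross-row block vanishes. Hence $\|\mathbf{\nabla softmax}(A)\|_F^2 = \sum_{i=1}^N \|J_i\|_F^2$. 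Bounding each block by the triangle inequality, $\|J_i\|_F \le \|\mathrm{diag}(p_i)\|_F + \|p_i p_i^T\|_F = \|p_i\|_2 + \|p_i\|_2^2 \le 1 + 1 = 2$, again using $\|p_i\|_2 \le \|p_i\|_1 = 1$. Therefore $\|\mathbf{\nabla softmax}(A)\|_F^2 \le 4N$, and taking square roots gives $\|\mathbf{\nabla softmax}(A)\|_F \le 2\sqrt{N}$.

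The genuinely delicate step is the block-diagonal claim for the gradient map: one must verify that $\partial (\mathbf{softmax}(A))_{ij} / \partial A_{kl}$ vanishes when $i \ne k$ and equals the $(j,l)$ entry of $\mathrm{diag}(p_i) - p_i p_i^T$ when $i = k$, so that the Frobenius norm of the whole operator decouples into a sum of per-row contributions. This is routine given the row-wise independence of $\mathbf{softmax}$, but it is where the argument's structure really lives; the remaining steps are elementary simplex estimates. (A slightly sharper analysis expanding $\|J_i\|_F^2 = \sum_j p_{ij}^2 - 2\sum_j p_{ij}^3 + \big(\sum_j p_{ij}^2\big)^2 \le 2$ would improve the constant to $\sqrt{2}$, but the triangle-inequality bound already suffices for the stated claim.)
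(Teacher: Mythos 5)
Your proof is correct and follows essentially the same route as the paper's: both bounds are obtained by exploiting the row-wise structure of $\mathbf{softmax}$, using $\|p_i\|_2 \le \|p_i\|_1 = 1$ for the first inequality and the block-diagonal Jacobian with per-row blocks $\mathrm{diag}(p_i) - p_i p_i^T$ for the second. Your triangle-inequality bound $\|J_i\|_F \le \|p_i\|_2 + \|p_i\|_2^2 \le 2$ is a cleaner packaging of the paper's explicit entrywise estimate (which reaches the same per-row constant via the identity $1 - F_{ij} = \sum_{k \ne j} F_{ik}$), and your observation that the constant can be sharpened is also valid but unnecessary for the stated claim.
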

\end{bluebox}

The key implication of theorem \ref{thm:softmax_regularity} is that during the training of a transformer with softmax self-attention, the Frobenius norm of each softmax self-attention matrix remains bounded by a value that grows as $\mathcal{O}(\sqrt{N})$. This ensures that backpropagation through the weights of the self-attention matrix does not lead to excessively large gradients. The proof hinges on the fact that the row normalization inherent in softmax effectively controls the Frobenius norm. For a detailed proof see \cref{app;proofs_sm}.

\subsection{Polynomial activations for self-attention}\label{subsec:poly_acts}

In \cref{subsec;sm_implicit}, we demonstrated that softmax implicitly regularizes the Frobenius norm of the self-attention matrix. Building on this, we now show that by scaling specific polynomial activations, a similar regularization effect on the Frobenius norm can be achieved in expectation, closely replicating the impact of softmax.

\begin{bluebox}
\begin{theorem}\label{thm:expectation}
    Let $X \in \R^{N \times D}$ and $Q$, $K \in \R^{D\times d}$ be i.i.d random variables distributed according to
    $X \sim \mathcal{N}(0, \sigma_x)$ and 
    $Q$, $K \sim \mathcal{N}(0, \sigma_t)$. We have the following expectations of the Frobenius norms of powers of the $N \times N$ matrix $(XQK^TX^T)^p$ for $p \geq 1$
\begin{equation}
        \mathbb{E}\bigg{|}\bigg{|}\bigg{(}
    \frac{XQK^TX^T}{\sqrt{d}} \bigg{)}^p
    \bigg{|}\bigg{|}_F \leq 
    \mathcal{O}(N)
\end{equation}
\end{theorem}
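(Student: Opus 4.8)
The plan is to reduce the matrix Frobenius-norm bound to a scalar moment computation by exploiting the sub-multiplicativity of the Frobenius norm and then controlling the expected operator/Frobenius norm of the single factor $M := \frac{XQK^TX^T}{\sqrt d}$. Concretely, since $\|M^p\|_F \le \|M\|_F^p$ (or more sharply $\|M^p\|_F \le \|M\|_{op}^{p-1}\|M\|_F$), and since $\|M\|_F \le \sqrt{N}\,\|M\|_{op}$ for an $N\times N$ matrix, it suffices to bound $\mathbb{E}\,\|M\|_{op}^p$ (up to the extra $\sqrt N$) or more directly $\mathbb{E}\,\|M\|_F^p$. So the first step I would take is to write $\mathbb{E}\|M^p\|_F \le \mathbb{E}\|M\|_F^p$ and then argue that the $p$-th moment of $\|M\|_F$ is controlled; the target $\mathcal{O}(N)$ on the right-hand side (not $\mathcal{O}(N^{p})$ or similar) is what makes this delicate and is the crux of the whole estimate.

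The second step is to understand the matrix $W := XQK^TX^T$. Write $Y := XQ \in \R^{N\times d}$ and $Z := XK \in \R^{N\times d}$, so $W = YZ^T$. Conditioned on $X$, the rows of $Y$ and $Z$ are Gaussian with covariance $\sigma_t^2 (X^TX) \otimes$ (appropriate identity structure); in particular $\mathbb{E}[WW^T \mid X]$ and the moments of entries of $W$ can be computed in terms of $X^TX$. I would compute $\mathbb{E}\|W\|_F^2 = \mathbb{E}\,\mathrm{tr}(WW^T)$ first via Wick/Isserlis on the Gaussian entries of $Q,K$ (and then $X$), obtaining something like $\sigma_t^4 \,\mathbb{E}\big[(\mathrm{tr}\,X^TX)^2 \,\text{-ish terms}\big]$, and then use concentration / hypercontractivity of Gaussian polynomial chaos: $\|W\|_F$ is a degree-$4$ polynomial in i.i.d.\ Gaussians, so all its $L^q$ norms are equivalent up to constants depending only on $q$ (and on $p$ when we raise to the $p$-th power). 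This lets me pass from the second moment to the $p$-th moment at the cost of a constant $C(p)$, which is harmless for an $\mathcal{O}(N)$ statement with $p$ fixed. After dividing by $d^{p/2}$ and accounting for $\mathbb{E}(\mathrm{tr}\,X^TX)\sim ND\sigma_x^2$ and the $\sqrt N$ from $\|\cdot\|_F \le \sqrt N\|\cdot\|_{op}$, the powers of $N$ should collect to give exactly $\mathcal{O}(N)$ — I would track these exponents carefully, since that bookkeeping is where the stated rate lives.

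The main obstacle I anticipate is precisely this exponent bookkeeping: naively bounding $\|M^p\|_F \le \|M\|_F^p$ and then $\|M\|_F^p \le (\sqrt N \|M\|_{op})^p$ gives a power of $N$ that grows with $p$, so the claimed $\mathcal{O}(N)$ (uniform in the sense of a fixed polynomial in $N$ for each fixed $p$, with the $p$-dependence hidden in the constant) requires that the operator norm $\|M\|_{op}$ itself be $\Theta(1)$ in $N$, i.e.\ that the $d^{-1/2}$ scaling exactly cancels the growth of $\|XQK^TX^T\|_{op}$. Establishing $\mathbb{E}\|M\|_{op}^p = \mathcal{O}(N^{?})$ with the right exponent — using that $XQ$ and $XK$ have roughly $\sqrt N$-scale singular values so that $\|XQK^TX^T\|_{op} \approx (\sqrt N)(\sqrt d)(\sqrt d) / \sqrt d \cdot(\text{stuff})$ — is the technical heart. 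I would handle it either by a direct moment bound on $\mathrm{tr}\big((WW^T)^{p}\big)$ via Gaussian integration by parts (Wick's theorem), which reduces everything to counting pairings and tracking which pairings contribute the top power of $N$, or by invoking standard non-asymptotic random-matrix bounds (e.g.\ Gaussian concentration for the singular values of $X$, $XQ$, $XK$) together with a union/truncation argument on the low-probability event where some norm is atypically large. The Wick-theorem route is cleaner for getting an \emph{exact} leading power and will be my primary approach; the random-matrix route is the fallback if the combinatorics becomes unwieldy.
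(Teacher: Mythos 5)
There is a genuine gap, and it starts at the very first line of your plan: you read $(XQK^TX^T)^p$ as a \emph{matrix} power and set up the whole argument around submultiplicativity ($\|M^p\|_F \le \|M\|_F^p$) and the operator norm of $M = XQK^TX^T/\sqrt{d}$. The paper means the \emph{entrywise} (Hadamard) power --- this is the only reading consistent with the surrounding text, where $\phi(x) = x^p$ is an activation applied elementwise to the attention scores in place of softmax's elementwise exponential, and it is what the paper's proof actually computes: it bounds $\mathbb{E}\big[\big(\sum_i a_{1i}b_{i1}\big)^{2p}\big]$, the $2p$-th moment of a single entry of $XQ\cdot K^TX^T$, by a Wick-type pairing count giving $\mathcal{O}(D^p(2p-1)!!\,\sigma_1^{2p}\sigma_2^{2p})$ independently of $N$, and then sums over the $N^2$ entries to get $\mathbb{E}\|\cdot\|_F \le \sqrt{\mathbb{E}\|\cdot\|_F^2} = \mathcal{O}(N)$. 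Under your matrix-power reading the claimed bound is not just hard to reach --- it is false for $p \ge 2$. You correctly identify that your route needs $\|M\|_{op} = \Theta(1)$ in $N$, but your own proposed $p=1$ computation refutes this: $\mathbb{E}\|M\|_F^2 \sim N^2D^2\sigma_x^4\sigma_t^4$ while $\mathrm{rank}(M) \le d$, so $\|M\|_{op} \ge \|M\|_F/\sqrt{d} = \Omega(N/\sqrt{d})$, and a typical-singular-value count for $M^p = XQ\,(K^TX^TXQ)^{p-1}K^TX^T/d^{p/2}$ gives $\|M^p\|_F \sim N^{(p+1)/2}$ up to constants, which already exceeds $\mathcal{O}(N)$ at $p=2$. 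So the crux you flag as ``delicate exponent bookkeeping'' is not delicate; it is an impossibility under your interpretation.

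The fix is to abandon the submultiplicativity/operator-norm scaffolding entirely and work entrywise: each entry of the activated matrix is $\big(\tfrac{1}{\sqrt d}\sum_{l,k,m} x_{ik}q_{kl}k_{ml}x_{jm}\big)^p$, a fixed-degree polynomial in the Gaussians whose $2$nd moment is $\mathcal{O}(D^p d^{\,0}\cdot C(p))$ with no $N$-dependence, after which Cauchy--Schwarz over the $N^2$ entries gives the theorem. Your instincts about Wick/Isserlis and Gaussian-chaos moment equivalence are exactly the right tools for \emph{that} computation (and would in fact let you handle the dependence between $XQ$ and $XK$ through the shared $X$ more carefully than the paper does, which silently treats them as independent Gaussian factors); they are just being aimed at the wrong quantity, $\mathrm{tr}\big((WW^T)^p\big)$, instead of the $2p$-th moment of a single bilinear form $x_i^TQK^Tx_j$.
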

\end{bluebox}

By scaling such an activation by $\frac{1}{\sqrt{N}}$ we can obtain a $\mathcal{O}(\sqrt{N})$ bound.

\begin{orangebox}
\begin{corollary}\label{cor:expectation_tight}
    Assume the same conditions as in theorem \ref{thm:expectation}.
    Then
    \begin{equation}
     \mathbb{E}\bigg{|}\bigg{|}\frac{1}{\sqrt{N}}\bigg{(}
    \frac{XQK^TX^T}{\sqrt{d}} \bigg{)}^p
    \bigg{|}\bigg{|}_F \leq \mathcal{O}(\sqrt{N}).
    \end{equation}
\end{corollary}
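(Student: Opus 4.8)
The plan is to reduce \cref{cor:expectation_tight} directly to \cref{thm:expectation} using two elementary facts: the positive homogeneity of the Frobenius norm and the linearity of expectation. Recall that for any matrix $B \in \R^{N\times N}$ and any scalar $c$ one has $||cB||_F = |c|\,||B||_F$; applying this with $c = 1/\sqrt{N}$ and $B = (XQK^TX^T/\sqrt{d})^p$ gives
\begin{equation}
\bigg|\bigg|\frac{1}{\sqrt{N}}\bigg(\frac{XQK^TX^T}{\sqrt{d}}\bigg)^p\bigg|\bigg|_F = \frac{1}{\sqrt{N}}\,\bigg|\bigg|\bigg(\frac{XQK^TX^T}{\sqrt{d}}\bigg)^p\bigg|\bigg|_F .
\end{equation}

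First I would take expectations of both sides of this identity and move the deterministic factor $1/\sqrt{N}$ outside the expectation by linearity. Then I would invoke \cref{thm:expectation}, which under the stated Gaussian assumptions on $X$, $Q$, $K$ bounds $\mathbb{E}\,||(XQK^TX^T/\sqrt{d})^p||_F$ by $\mathcal{O}(N)$. Substituting this bound yields $\mathbb{E}\,||\tfrac{1}{\sqrt{N}}(XQK^TX^T/\sqrt{d})^p||_F \le \tfrac{1}{\sqrt{N}}\cdot\mathcal{O}(N) = \mathcal{O}(\sqrt{N})$, which is exactly the claimed inequality.

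Honestly, there is no genuine obstacle here: the corollary is nothing more than a rescaling of \cref{thm:expectation}, and all the substantive work — controlling the moments of the product of Gaussian matrices $XQK^TX^T$ and of its powers — is already done in that theorem. The only points needing any care are bookkeeping: confirming that $N$, $d$, and $p$ are treated as deterministic so the scalar legitimately commutes with $\mathbb{E}$, and noting that the implicit constant in the $\mathcal{O}(\cdot)$ notation is independent of $N$ (though it may depend on $p$, $\sigma_x$, $\sigma_t$, $d$), so that dividing the $\mathcal{O}(N)$ estimate by $\sqrt{N}$ indeed produces a valid $\mathcal{O}(\sqrt{N})$ estimate.
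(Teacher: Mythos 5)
Your proposal is correct and matches the paper's (implicit) argument exactly: the paper offers no separate proof of \cref{cor:expectation_tight}, treating it as an immediate consequence of \cref{thm:expectation} obtained by pulling the deterministic scalar $1/\sqrt{N}$ out of the Frobenius norm and the expectation. Your added remarks on the constant's dependence on $p$, $d$, $\sigma_x$, $\sigma_t$ are accurate and, if anything, slightly more careful than the paper.
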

\end{orangebox}
\cref{cor:expectation_tight} establishes that activations of the form $\phi(x) := \frac{1}{\sqrt{N}}x^p$ provide a level of regularization, in expectation, similar to that of softmax when applied to the self-attention matrix. The proof of theorem \ref{thm:expectation} can be found in appendix \ref{app;poly_proofs}.
The next property we want to prove is one similar to the gradient bound obtained in 
\cref{thm:softmax_regularity}. Since the self-attention matrix has parameters given by the queries $Q$ and keys $K$ \citep{vaswani2017attention}, this implies that during the training of a transformer the $Q$ and $K$ matrices are the only aspects of the self-attention matrix that get updated. Therefore, we compute a derivative bound with respect to the $Q$ and $K$ derivatives.

\begin{bluebox}
\begin{theorem}\label{thm:grad_expectation}
    Let $X \in \R^{N \times D}$ and $Q$, $K \in \R^{D\times d}$ be i.i.d random variables distributed according to
    $X \sim \mathcal{N}(0, \sigma_x)$ and 
    $Q$, $K \sim \mathcal{N}(0, \sigma_t)$. Then the expectation of the of the derivative of the matrix
    $\frac{(XQK^TX^T)^p}{\sqrt{d}}$ w.r.t the $Q$ parameter matrix for $p \geq 1$ is given by
\begin{equation}
    \mathbb{E}\bigg{|}\bigg{|}
    \frac{\partial}{\partial Q}\bigg{(}
    \frac{(XQK^TX^T)^p }{\sqrt{d}}
    \bigg{)}
    \bigg{|}\bigg{|}
    \leq \mathcal{O}(N)
\end{equation}
\end{theorem}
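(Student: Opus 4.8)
The plan is to factor the matrix power through two rectangular Gaussian matrices, differentiate term by term, reduce the resulting derivative tensor to a product of operator and Frobenius norms, and only then take expectations using standard moment estimates for Gaussian random matrices. Throughout, $||\cdot||$ applied to the derivative is understood as the Hilbert--Schmidt (Frobenius) norm of the $4$-tensor with entries $\partial (M^p)_{ij}/\partial Q_{ef}$, in keeping with \cref{thm:expectation}. Write $A := XQ$ and $B := XK$ in $\R^{N\times d}$, so that $M := XQK^TX^T = AB^T$ and only $A$ depends on $Q$. An index computation gives $\partial M/\partial Q_{ef} = X_{:e}(B_{:f})^T$, a rank-one $N\times N$ matrix ($X_{:e}$, $B_{:f}$ being the $e$-th and $f$-th columns of $X$ and $B=XK$), so the product rule for matrix powers yields
\[
\frac{\partial M^p}{\partial Q_{ef}} = \sum_{s=1}^{p} M^{s-1}\, X_{:e}\,(B_{:f})^T\, M^{p-s}.
\]

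Next I would bound this deterministically. Using $||uv^T||_F = ||u||_2||v||_2$ and submultiplicativity $||CD||_F \le ||C||_{op}||D||_F$, each summand is at most $||M||_{op}^{p-1}||X_{:e}||_2||B_{:f}||_2$, hence $||\,\partial M^p/\partial Q_{ef}\,||_F \le p\,||M||_{op}^{p-1}||X_{:e}||_2||B_{:f}||_2$. Squaring, summing over all pairs $(e,f)$, and using $\sum_e ||X_{:e}||_2^2 = ||X||_F^2$ and $\sum_f ||B_{:f}||_2^2 = ||XK||_F^2$, this collapses to
\[
||\,\partial M^p/\partial Q\,|| \;\le\; p\,||M||_{op}^{\,p-1}\,||X||_F\,||XK||_F \;\le\; p\,||X||_{op}^{2p-1}\,||X||_F\,||Q||_{op}^{p-1}\,||K||_{op}^{p-1}\,||K||_F,
\]
where the second inequality uses $||M||_{op}\le ||X||_{op}^2||Q||_{op}||K||_{op}$ and $||XK||_F\le ||X||_{op}||K||_F$.

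It then remains to take the expectation and divide by $\sqrt d$. Since $X$, $Q$, $K$ are independent the expectation factorizes across them, and within each factor I would apply Hölder's (or Cauchy--Schwarz) inequality to decouple products of norms of the same matrix, e.g.\ $\mathbb{E}[\,||X||_{op}^{2p-1}||X||_F\,] \le (\mathbb{E}||X||_{op}^{4p-2})^{1/2}(\mathbb{E}||X||_F^2)^{1/2}$. The Frobenius moments are elementary ($\mathbb{E}||X||_F^2 = ND\sigma_x^2$, $\mathbb{E}||K||_F^2 = Dd\sigma_t^2$, up to the variance convention), and the operator-norm moments are controlled by Gaussian concentration of the top singular value, $\mathbb{E}||X||_{op}^m \le (C\sigma_x(\sqrt N+\sqrt D))^m$, with an $N$-free analogue for the $D\times d$ matrices $Q,K$. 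Assembling these and absorbing $p$, $D$, $d$ and the variances into the constant gives the order in $N$ asserted in the statement (the same accounting as in \cref{thm:expectation,cor:expectation_tight}). An equivalent route that parallels the proof of \cref{thm:expectation} more literally is to use Jensen, $\mathbb{E}||\,\partial_Q M^p\,|| \le (\mathbb{E}||\,\partial_Q M^p\,||^2)^{1/2}$, expand $||\,\partial_Q M^p\,||^2 = \sum_{i,j,e,f}(\partial_{Q_{ef}}(M^p)_{ij})^2$ as a polynomial in the i.i.d.\ Gaussian entries of $X,Q,K$, and evaluate it via Wick's formula.

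The main obstacle is this last, probabilistic, step: controlling the high moments of $||X||_{op}$ needs genuine non-asymptotic random-matrix input (an $\varepsilon$-net or Gaussian-concentration bound) rather than a crude estimate, and in the Wick-formula variant one must count the surviving index pairings carefully so that the power of $N$ in the final bound comes out as claimed. The term-by-term differentiation and the deterministic norm bound, by contrast, are routine.
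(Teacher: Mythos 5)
There is a genuine gap, and it starts with the interpretation of the power. In this paper $(XQK^TX^T)^p$ is the \emph{entrywise} $p$-th power --- it is the activation $\phi(x)=x^p$ applied elementwise to the similarity matrix, exactly as $\mathbf{softmax}$ is applied to it (see \cref{cor:expectation_tight}, ``activations of the form $\phi(x):=\frac{1}{\sqrt N}x^p$'', and the paper's proof of \cref{thm:expectation}, which computes $\mathbb{E}\big[((AB)_{11})^{2p}\big]$, i.e.\ the power of a single entry). You instead differentiate the matrix power $M^p=M\cdots M$ via the product rule $\sum_{s}M^{s-1}(\partial M/\partial Q_{ef})M^{p-s}$. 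That is a different object, and the difference matters quantitatively: each entry of the matrix power involves $p-1$ internal sums over $N$ indices, whereas each entry $(x_i^TQK^Tx_j)^p$ of the entrywise power depends only on rows $i,j$ of $X$ and involves no sum over $N$. The paper's proof exploits precisely this: $\partial_Q\big((x_i^TQK^Tx_j)^p\big)=p\,(x_i^TQK^Tx_j)^{p-1}x_ix_j^TK$, each of the $N^2$ terms has an $N$-independent second moment, so $\mathbb{E}\|\cdot\|_F^2=\mathcal{O}(N^2)$ and the norm is $\mathcal{O}(N)$ for every $p$.

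Even granting your interpretation, the final step does not deliver the claimed bound, and this is where you wave your hands (``assembling these \dots gives the order in $N$ asserted''). Your deterministic estimate is $p\,\|M\|_{op}^{p-1}\|X\|_F\|XK\|_F$; with $D,d$ fixed and $\mathbb{E}\|X\|_{op}^m\lesssim(\sqrt N+\sqrt D)^m$ you get $\|M\|_{op}^{p-1}\lesssim\|X\|_{op}^{2(p-1)}\cdot(\cdots)=\mathcal{O}(N^{p-1})$ and $\|X\|_F\|XK\|_F=\mathcal{O}(N)$, so the product is $\mathcal{O}(N^p)$, not $\mathcal{O}(N)$. For $p=1$ your argument is fine and coincides in spirit with the paper's, but for $p>1$ it either proves a weaker bound (under the matrix-power reading) or needs to be restarted from the correct, entrywise reading, in which case the operator-norm machinery is unnecessary: a direct second-moment computation of $p^2(x_i^TQK^Tx_j)^{2p-2}\|x_ix_j^TK\|_F^2$ term by term, as in the appendix, is what closes the argument.
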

\end{bluebox}

\begin{figure}[ht!]
    \centering
    \includegraphics[width=0.5\linewidth]
    {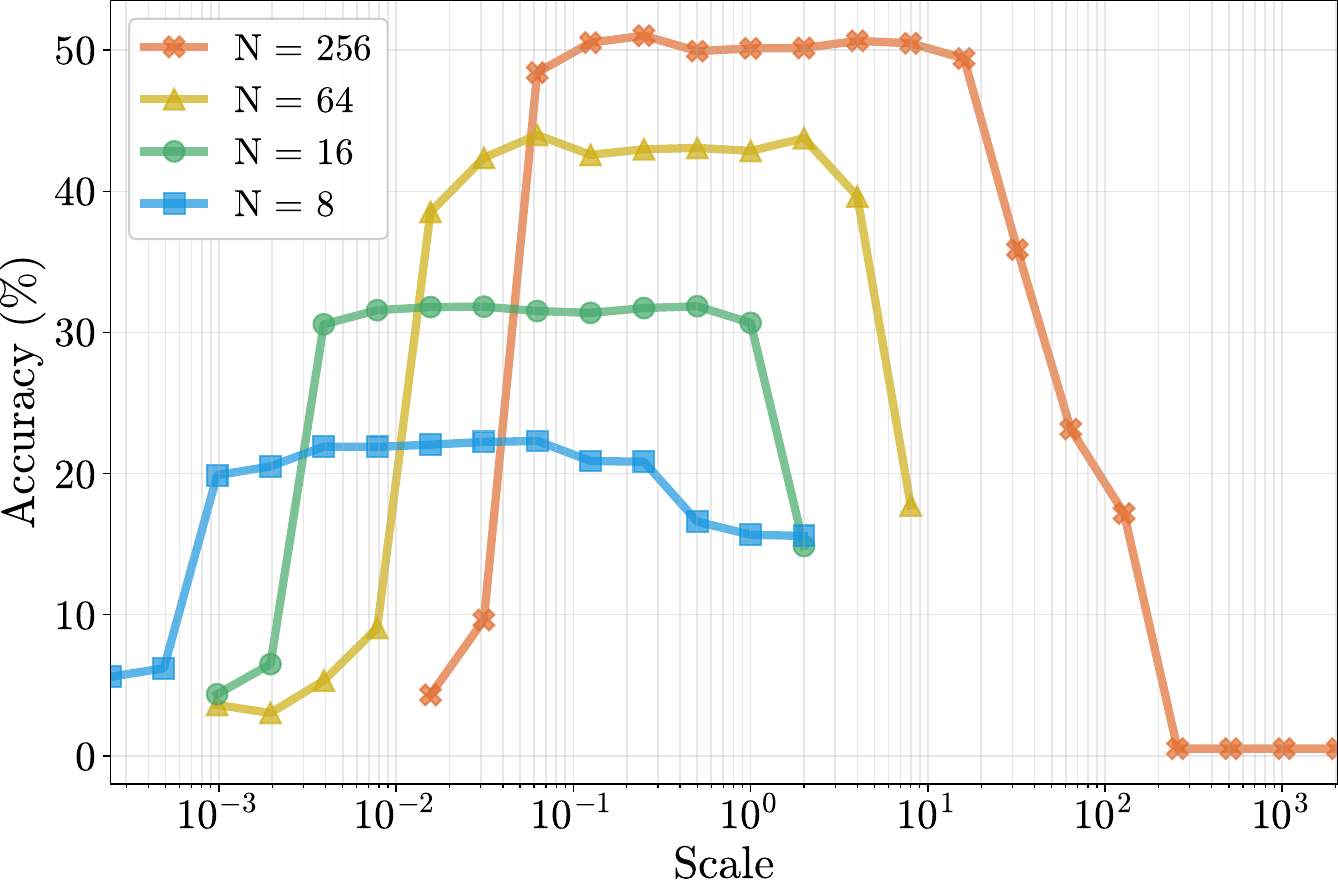}
    \vspace{-1em}
    \caption{Training ViT-Tiny with the activation $\phi(x) = x^3$ with different sequence lengths and different scales. As the sequence length gets larger, the $k$ scale (x-axis) needed to obtain good accuracy when using $\frac{1}{k}x^3$ as an activation increases validating the theory from section \ref{subsec:poly_acts}.}
    \label{fig:cube_imgsize}
\end{figure}

\begin{orangebox}
\begin{corollary}\label{cor:grad_expectation_tight}
Assume the same condition as in theorem \ref{thm:grad_expectation}. Then
\begin{equation}
\mathbb{E}\bigg{|}\bigg{|}\frac{1}{\sqrt{N}}
    \frac{\partial}{\partial Q}\bigg{(}
    \frac{(XQK^TX^T)^p }{\sqrt{d}}
    \bigg{)}
    \bigg{|}\bigg{|}
    \leq \mathcal{O}(\sqrt{N}).
\end{equation}
\end{corollary}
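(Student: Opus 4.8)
The plan is to derive \cref{cor:grad_expectation_tight} directly from \cref{thm:grad_expectation} by exploiting the absolute homogeneity of the norm together with the linearity of expectation. The object $\frac{\partial}{\partial Q}\big(\frac{(XQK^TX^T)^p}{\sqrt{d}}\big)$ is a (random) multilinear map, and $||\cdot||$ denotes a norm on the corresponding space; like every norm it satisfies $||\lambda\, T|| = |\lambda|\,||T||$ for scalars $\lambda$. Hence the deterministic scalar $\tfrac{1}{\sqrt{N}}$ can be pulled outside the norm and then outside the expectation.

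Concretely, first I would write
\begin{equation}
\mathbb{E}\bigg{|}\bigg{|}\frac{1}{\sqrt{N}}\frac{\partial}{\partial Q}\bigg{(}\frac{(XQK^TX^T)^p}{\sqrt{d}}\bigg{)}\bigg{|}\bigg{|} = \frac{1}{\sqrt{N}}\,\mathbb{E}\bigg{|}\bigg{|}\frac{\partial}{\partial Q}\bigg{(}\frac{(XQK^TX^T)^p}{\sqrt{d}}\bigg{)}\bigg{|}\bigg{|},
\end{equation}
using homogeneity of the norm inside the expectation and then pulling the constant $\tfrac{1}{\sqrt{N}}$ out of $\mathbb{E}$ by linearity. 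Then I would invoke \cref{thm:grad_expectation}, which bounds the remaining expectation by $\mathcal{O}(N)$, so that the whole expression is at most $\tfrac{1}{\sqrt{N}}\cdot\mathcal{O}(N) = \mathcal{O}(\sqrt{N})$, which is exactly the claim.

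The only points requiring a word of care — and none of these is a genuine obstacle — are: (i) confirming that the expectation on the right-hand side is finite, so that the rearrangement is legitimate, which is guaranteed by \cref{thm:grad_expectation} itself since its bound $\mathcal{O}(N)$ is finite for each fixed $N$; and (ii) tracking the $\mathcal{O}(\cdot)$ constants so that the implied constant in the corollary is inherited from that of \cref{thm:grad_expectation} (with its dependence on $p$, $\sigma_x$, $\sigma_t$, $D$, $d$ already absorbed there). There is no hard step: all of the analytic content lives in \cref{thm:grad_expectation}, and the corollary simply records the observation that dividing the gradient activation by $\sqrt{N}$ converts its $\mathcal{O}(N)$ growth into the $\mathcal{O}(\sqrt{N})$ growth that matches the softmax gradient bound of \cref{thm:softmax_regularity}.
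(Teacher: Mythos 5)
Your proposal is correct and matches the paper's (implicit) argument exactly: the corollary is obtained by pulling the deterministic factor $\tfrac{1}{\sqrt{N}}$ out of the norm and the expectation and then applying the $\mathcal{O}(N)$ bound from Theorem~\ref{thm:grad_expectation}. The paper offers no separate proof of this corollary precisely because it is this one-line consequence, so there is nothing further to add.
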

\end{orangebox}

An analogous estimate holds for derivatives with respect to the $K$ matrix. The proof of theorem \ref{thm:grad_expectation} can be found in appendix \ref{app;poly_proofs}.

\begin{remark}
\cref{cor:expectation_tight} and \cref{cor:grad_expectation_tight} suggest that polynomial activations of the form $\phi(x) = \frac{1}{\sqrt{N}}x^p$, with $p > 0$, can achieve performance comparable to softmax when applied to self-attention matrices. We point out that both corollaries rested on the assumption that $X$, $Q$ and $K$ are i.i.d random variables. In general, this is only true at initialization when training a transformer \cite{albert2025randlora}. Although this is a limitation in the theory the experiments in \cref{sec:exps} show that this insight can be used effectively to develop new attention blocks that perform comparable to softmax yet violate the three conditions of positivity, normalized rows summing to $1$ and sparsity showing that attention blocks do not need to be modeled as a probability distribution.    
\end{remark}

\section{Testing the theory}\label{subsec:testing_theory}

In this section we test the theory developed in \cref{subsec:poly_acts} on small vision transformers.
We will consider the activation 
$\phi(x) =  \frac{1}{k}x^3$, where $k > 0$ is a fixed scale, as this activation clearly violates the three key conditions of softmax based attention; positivity, normalization and sparsity. We found that polynomials $\frac{1}{k}x^p$ for $p > 3$ did not perform well during training as they witnessed a gradient vanishing problem due to the fact that the function $x^p$ for $p$ large have very small values around $0$.

The first experiment we conducted was to test how the Top-$1\%$ accuracy changes for a ViT-Tiny vision transformer \cite{steiner2021train}, trained on Tiny-Imagenet dataset \cite{tinyimagenet2015},
as we change the sequence length $N$ of the input and the scale predicted in corollaries \ref{cor:expectation_tight} and \ref{cor:grad_expectation_tight} when using the activation 
$\frac{1}{k}x^3$. The standard ViT-Tiny model comprises $12$ transformer layers, each equipped with $3$ attention heads, with each head having dimension $64$.
We considered four different input sequence lengths $N$ of sizes $256$, $64$, $16$ and $8$. For each such sequence length, we ran a ViT-Tiny architecture with the activation $\frac{1}{k}x^3$
where $k$ ranged from roughly $10^{-3}$ to $10^3$. 
According to the theory developed in section \ref{sec:theory}, the Frobenius norm of $\frac{1}{\sqrt{N}}x^3$ scales according to $\mathcal{O}(\sqrt{N})$. Thus the best accuracy should occur when $k = \mathcal{O}(\sqrt{N})$ and should degrade for other values due to training instability. \cref{fig:cube_imgsize} shows the results of the experiment, we note that the $x-axis$ plots the values of $k$. We see that as the sequence length increases the factor of $k$ needs to increase so that the activation $\frac{1}{k}x^3$ performs well on the ViT-Tiny architecture as predicted by the theory developed in corollaries \ref{cor:expectation_tight} and \ref{cor:grad_expectation_tight}.

In a second experiment we decided to compare the performance of the original ViT-Tiny architecture, that uses an input sequence length of $256$, with a softmax activation and a polynomial activation on the Tiny-ImageNet dataset.
In 
\cref{cor:expectation_tight} and \cref{cor:grad_expectation_tight} it was pointed out that the scaling of the polynomial is important to keep the Frobenius norm of the polynomial attention matrix from becoming too large. The results of those corollaries suggested that a scale to use is 
$\frac{1}{\sqrt{N}}$, $N$ being the sequence length, which in this case is 
$\frac{1}{\sqrt{256}} = \frac{1}{16}$. We therefore decided to
to compare the three activations softmax, $x^3$ and $\frac{1}{16}x^3$. Each was trained for 200 epochs using the AdamW optimizer.
To begin with we computed the Frobenius norm of the attention matrix throughout training averaged over all the heads in layers 2, 7 and 12 of the ViT-Tiny architecture when trained on the Tiny-ImageNet dataset. \cref{fig:attention_norm_tiny} plots the results. We see from the figure that the Frobenius norm of of the $x^3$ archicture is much larger than softmax but scaling it by 
$\frac{1}{16}$ brings it down to softmax levels.
Similarly, \cref{fig:jac_norm_tiny} shows the Jacobian's Frobenius norm, where scaling also brings the norms closer to softmax, ensuring more stable gradients. Further plots for other layers are in \cref{app:frob_norm_complete}. 
\cref{tab:tinyimagenet_results} presents the final Top-1\% accuracy achieved by each activation function. Notably, $\frac{x^{3}}{16}$ delivers the best performance. In contrast, the unscaled $x^3$ activation yields significantly lower Top-1\% accuracy, underscoring the importance of incorporating an appropriate scaling factor.

\begin{figure*}[ht!]
    \centering
    \includegraphics[width=1.0\linewidth]
    {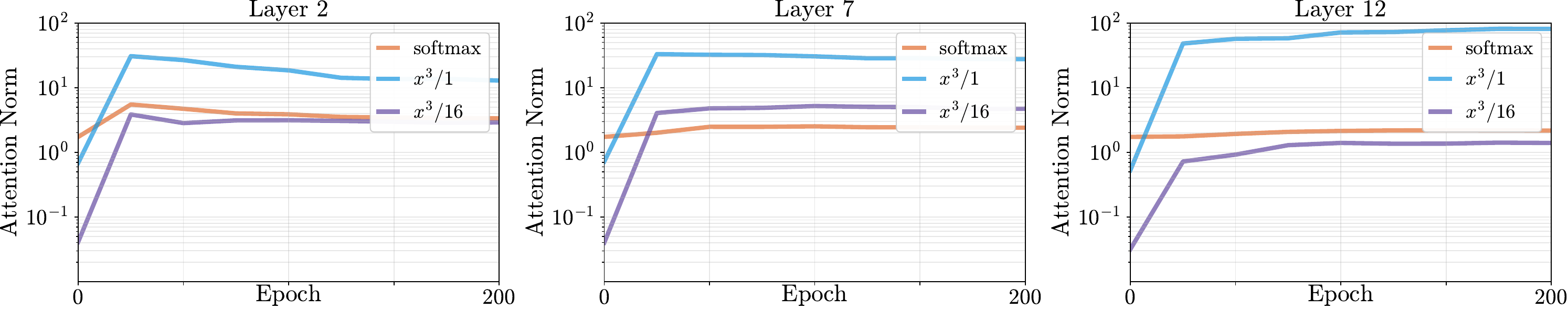}
    \caption{Frobenius norm of the self-attention matrix with three different activations in layer 2, 7 and 12 of the ViT-Tiny architecture during training.}
    \label{fig:attention_norm_tiny}
\end{figure*}

\begin{figure*}[ht!]
    \centering
    \includegraphics[width=1.0\linewidth]
    {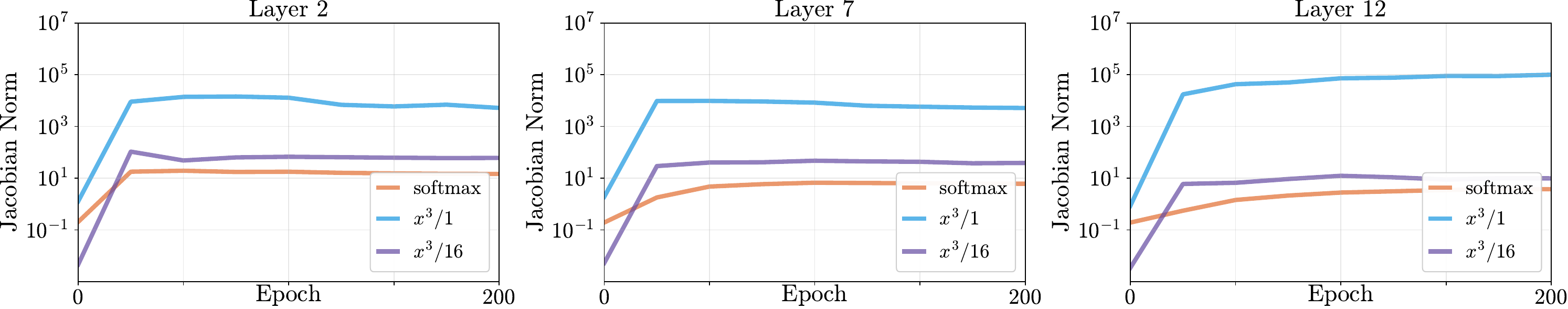}
    \vspace{-1em}
    \caption{Frobenius norm of the jacobian of the self-attention matrix with three different activations in layer 2, 7 and 12 of the ViT-Tiny architecture during training.}
    \label{fig:jac_norm_tiny}
\end{figure*}

\begin{table}[!ht]
\caption{Comparison of Top-1\% accuracy on Tiny-ImageNet between softmax and polynomial activations.}
    \centering
    \begin{tabular}{c|c c c  }

         & softmax & $\frac{x^{3}}{16}$ & $x^3$ \\
        \midrule
        Top-1\% accuracy & 50.26 & \textbf{50.5} & 45.3 \\

        \midrule
    \end{tabular}
    \vspace{0.1cm}
    \label{tab:tinyimagenet_results}
\end{table}

\section{Experiments}\label{sec:exps}

In this section, we evaluate a simple polynomial activation as an alternative to the standard softmax across a range of transformer applications commonly studied in the literature. The goal is to empirically challenge the conventional softmax properties, positivity, row-normalization, and sparsity—and examine whether these conditions are truly necessary for good transformer performance.

Building on the theoretical foundations from \cref{sec:theory}, we focus on the cubic polynomial $x^3$ as a test case. Notably, $x^3$ introduces both positive and negative values, does not normalize rows to sum to 1, and generally produces dense attention matrices—violating all three traditional softmax conditions.

We consider two scaling strategies for $x^3$:
\begin{itemize}
\item[1.] \textbf{Fixed scale:} Following \cref{sec:theory}, we scale $x^3$ by the inverse square root of the sequence length (which is fixed throughout training), as theory suggests this maintains optimization stability.
\item[2.] \textbf{Dynamic scale:} Recognizing that the assumption of i.i.d. normal-distributed $Q$, $K$, $V$, used in \cref{sec:theory},  holds primarily at initialization, we also explore a learnable scale. This dynamic scale is initialized as above but optimized during training.
\end{itemize}

Our experiments are not designed to achieve state-of-the-art results. Instead, they aim to question the softmax paradigm and show that alternative activations, even those violating softmax’s traditional properties, can still lead to effective transformer models. For each experiment we ran five trials and report the mean result.

\subsection{Image Classification}\label{subsec:image_classi}

We conducted an image classification task using various vision transformer architectures from the literature on the ImageNet-1k dataset. For this task, the standard sequence length employed by the vision transformers on the ImageNet-1k dataset is $196$.


We trained all models on the ImageNet-1k dataset from scratch and report Top-1 accuracy on the validation set. We examined our approach along with the following four transformer architectures to show its generalization, ViT-B \cite{dosovitskiy2020image}, DeiT \cite{touvron2021training}, Swin Transformer \cite{liu2021swin}, XCiT \cite{ali2021xcit}. Each transformer was trained following the approach in each paper.
The results are shown in \cref{tab:vits}. As can be seen from the table the $x^3$ activation with a dynamic scale performed the best and the one with a fixed scale performed comparable to softmax. On the other hand $x^3$ (with no scale) underperformed on all ViTs showing how important the theory on scaling as developed in \cref{sec:theory} is.
For a discussion on other polynomials and linear attention see \cref{app:other_polys}.




\begin{table}[!ht]
    \caption{Comparisons of pre-training models with different activation functions on ImageNet-1k. We report top-1 classification accuracy (\%).}
    \centering
    \scalebox{1.0}{
    \begin{tabular}{ccccc}
        \rowcolor{mygrey}
        \multirow{2}{*}{} & \multicolumn{4}{c}{Models} \\
        \rowcolor{mygrey}
        & ViT-B & DeiT-B & Swin-B & XciT-M \\
        \midrule
        softmax & 80.3 & 81.5 & 83.5 & 82.7 \\
        \midrule
        $x^3$ + fixed scale & 80.2 & 81.4 & \cellcolor{myorange}83.6 & \cellcolor{myorange}82.8 \\
        \midrule
        $x^3$ + dynamic scale & \cellcolor{mygreen}80.3 & \cellcolor{mygreen}{81.6} & \cellcolor{mygreen}{83.6} & \cellcolor{mygreen}{82.9} \\
        \midrule
        $x^3$ & 78.1 & 78.5 & 79.9 & 79.5 \\
        \midrule
    \end{tabular}
    }
    \label{tab:vits}
\end{table}

\paragraph{Visualizing attention heads:} Softmax attention traditionally satisfies three key properties: positivity, row normalization, and sparsity. In contrast, the polynomial activation $x^3$ (with or without positive scaling) takes positive values for $x > 0$ and negative values for $x < 0$, thus violating these softmax constraints. To better understand how this affects attention patterns, we analyzed the self-attention matrices of ViT-B models trained with the $x^3$ + dynamic scale activation and compared them to those using softmax. We visualized heatmaps of the attention matrices after convergence, focusing on two representative layers and heads, and averaged over a fixed batch of 128 samples. \cref{fig:heat_2} shows results for layer 2, head 8, where the $x^3$ + dynamic scale activation produces attention scores with both positive and negative values, unlike softmax. Similarly, \cref{fig:heat_12} illustrates distinct patterns for layer 12, head 6, highlighting how the two activations differ in learned attention distributions.

\paragraph{Interpretability.} Even with these differences, the $x^3$ + dynamic scale activation still learns meaningful attention patterns. For instance, in \cref{fig:heat_2} (left), we observe 14 bands about the diagonal, indicating that the head has learned to attend to patches in the same image row as the query patch, sufficient for effective image classification. This suggests that the sign of attention values (positive or negative) is not inherently critical for attention allocation. Further, \cref{fig:heat_12} (left) reveals vertical lines, showing attention that depends solely on key positions, independent of the query position. Smaller key indices receive higher attention weights, focusing model capacity where it matters most for classification tasks. These findings demonstrate that, despite deviating from softmax properties, the $x^3$ + dynamic scale activation enables the model to discover effective attention patterns. We noticed similar attention patterns for the
$x^3$ + fixed scale activation.


\begin{figure}[ht!]
    \centering
    \includegraphics[width=1.\linewidth]
    {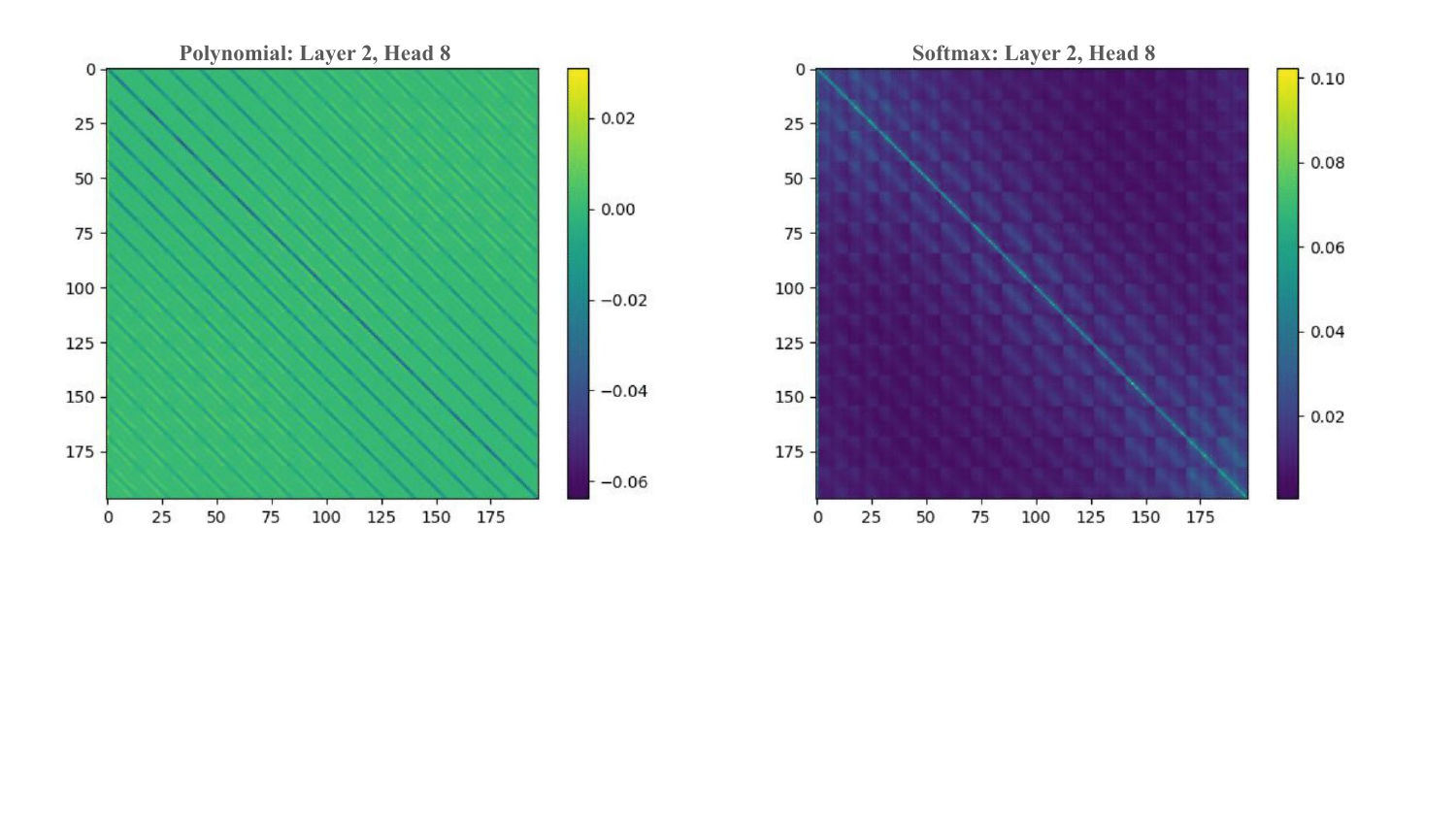}
    \vspace{-9em}
    \caption{Heat maps of the self-attention matrix in layer 2, head 8, of a ViT base architecture, comparing $x^3$ + dynamic (left) and softmax (right) activations after training. The stark difference in self-attention patterns between the two activations is evident, showing distinct distributions across input tokens.}
    \label{fig:heat_2}
\end{figure}

\begin{figure}[ht!]
    \centering
    \includegraphics[width=1.0\linewidth]
    {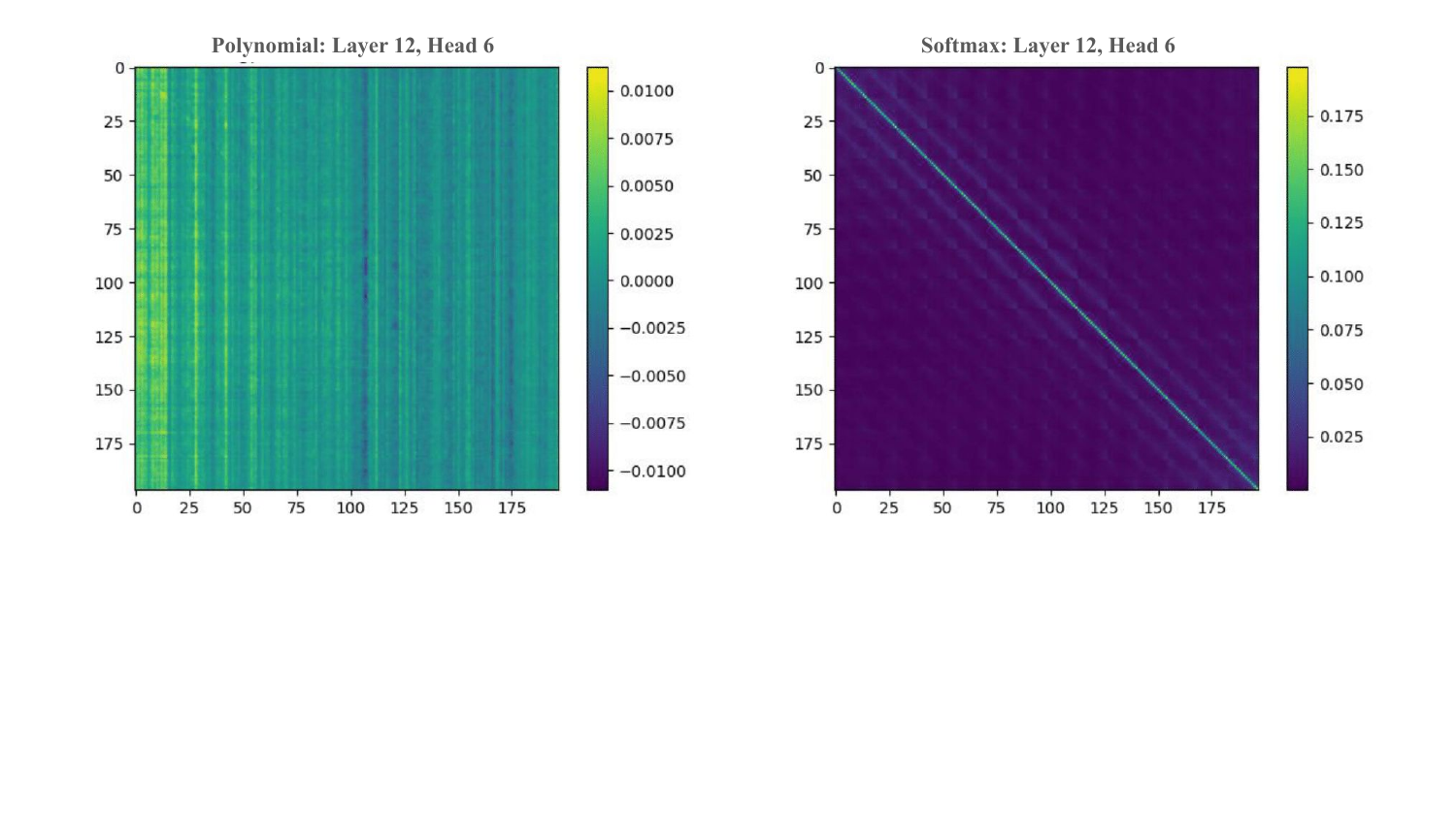}
    \vspace{-9em}
    \caption{Heat maps of the self-attention matrix in layer 12, head 6, of a ViT base architecture, comparing $x^3$ + dynamic scale (left) and softmax (right) activations after training. The contrast in self-attention patterns between the two activations is clearly visible.}
    \label{fig:heat_12}
\end{figure}

\subsection{Object Detection and Instance Segmentation}\label{subsec:object_detec}

In this section, in order to examine the transfer learning ability of our models, we demonstrate our approach on object detection and segmentation tasks by fine-tuning a ImageNet-pretrained XCiT model on them. Our experiments are conducted on COCO 2017 \cite{lin2014microsoft}, which has 118K training images and 5K validation images with 80 categories. We integrate the XCiT architecture as the backbone in the Mask R-CNN \citep{he2017mask} detector with a Feature Pyramid Network (FPN). Due to XCiT's inherently columnar design, we adapt it for FPN compatibility by extracting features from various layers for XCiT-S12. These features have a consistent stride of 16. The feature resolutions are then adjusted to strides of [4, 8, 16, 32]. This downsampling is accomplished through max pooling, while upsampling is achieved using a single transposed convolution layer. 
We conducted experiments on XCiT-S12 models using 16×16 patches with the activations $x^3$ + fixed scale, $x^3$ + dynamic scale, with scale initialized to 
$\frac{1}{14}$, and softmax. We found when we trained with $x^3$ the model would not converge thereby suggesting that a scaling is extremely important as predicted by the theory in \cref{subsec:poly_acts}. The training hyperparameters are given in 
\cref{app:exps_hyps}.
\cref{tab:transferlearning} shows the results of the experiment. From the table we see that the activation $x^3$ + dynamic scale obtains better performance to softmax with $x^3$ + fixed scale obtaining comparable performance.




\begin{table*}[!ht]
    \caption{COCO object detection and instance segmentation performance on the mini-val set. All backbones are pretrained on ImageNet-1k, and use Mask R-CNN model. $AP^b$: Average Precision for bounding box predictions, $AP^b_{50/75}$: Average Precision at an IoU threshold of 0.50/0.75 for bounding box predictions, $AP^m$: Average Precision for mask predictions, $AP^m_{50/75}$: Average Precision at an IoU threshold of 0.50/0.75 for mask predictions.}
    \centering
    \setlength{\tabcolsep}{12pt}
    \begin{tabular}{c c c c c c c}
        \rowcolor{mygrey}
        {} & $AP^b$ & $AP^b_{50}$ & $AP^b_{75}$ & $AP^m$ & $AP^m_{50}$ & $AP^m_{75}$ \\
        \midrule
        softmax & 44.9 & 66.1 & 48.9 & 40.1 & 63.1 & 42.8 \\
        \midrule
        $x^3$ + fixed scale & 44.8 & \cellcolor{myorange}66.3 & \cellcolor{myorange}49.1 & \cellcolor{myorange}40.2 & \cellcolor{myorange}63.1 & \cellcolor{myorange}43.0 \\
        \midrule
        $x^3$ + dynamic scale & \cellcolor{mygreen}45.1 & \cellcolor{mygreen}66.5 & \cellcolor{mygreen}49.4 & \cellcolor{mygreen}40.4 & \cellcolor{mygreen}63.2 & \cellcolor{mygreen}43.1 \\
        \midrule
    \end{tabular}
    \label{tab:transferlearning}
\end{table*}

\subsection{Nystr\"omformer on LRA benchmark}\label{subsec:nlp_exps}

Nystr\"omformer is a transformer designed to handle long-range dependencies more efficiently by approximating the self-attention mechanism with the Nystr\"{o}m method \cite{xiong2021nystromformer}. 
To assess the effectiveness of our approach, we trained models on five benchmarks from the Long Range Arena (LRA) suite \cite{tay2020long}: ListOps, Text Classification, Retrieval, Image Classification, and Pathfinder.

On each dataset we compared a Nystr\"omformer with softmax, 
$x^3$ + dynamic scale, and $x^3$. For the $x^3$ + dynamic scale activation on each dataset we used a different scale as initialization based on the sequence length used in the original Nystr\"omformer model. For ListOps the original model used a sequence length of $512$ and so the scale we used as initialization was $1/\sqrt{512}$, on Text Classification the input sequence used is $1024$ and so the scale we used at initialization was $1/\sqrt{1024}$, on Retrieval the input sequence used is $2048$ and the initialized scale we used was 
$1/\sqrt{2048}$, on Image Classification the input sequence is 
$4096$ and the initialized scale we used was $1/\sqrt{4096}$, and on Pathfinder the input sequence used is 
$8192$ and the initialized scale we used was $1/\sqrt{8192}$. We followed the training regime of the original paper \cite{xiong2021nystromformer}. The results are shown in \cref{tab:nystrom}. As can be seen from the table the $x^3$ + dynamic scale activation results in better performance compared to softmax with $x^3$ + fixed scale performing comparably. However $x^3$ (with no scale) underperforms on all tasks.

\begin{table*}[!ht]
    \caption{Comparisons of Nystr\"omformer models with different activation functions on the LRA benchmark. We report the accuracy (\%).}
    \centering
    \setlength{\tabcolsep}{12pt}
    \begin{tabular}{cc c c c c }
        \rowcolor{mygrey}
        {} & ListOps & Text & Retrieval & Image & Pathfinder \\
        \midrule
        softmax & 37.1 & 63.8 & 79.8 & 39.9 & 72.9 \\
        \midrule
        $x^3$ + fixed scale & \cellcolor{myorange}37.2 & 63.7 & \cellcolor{myorange}80.0 & \cellcolor{myorange}39.9 & \cellcolor{myorange}72.9 \\
        \midrule
        $x^3$ + dynamic scale & \cellcolor{mygreen}37.5 & \cellcolor{mygreen}63.9 & \cellcolor{mygreen}81.1 & \cellcolor{mygreen}40.1 & \cellcolor{mygreen}73.1 \\
        \midrule
        $x^3$ & 32.3 & 62.0 & 78.5 & 38.1 & 67.9 \\
        \midrule
    \end{tabular}
    \label{tab:nystrom}
\end{table*}

\section{Physics Informed Transformers}

Recently, dynamical data has become essential for modeling time-dependent systems. \cite{zhao2023pinnsformer} introduced transformer architectures for solving partial differential equations (PDEs), demonstrating superior performance across various PDE tasks. These models effectively capture interactions between spatial and temporal components inherent in dynamical systems. In \cref{sec:pinns}, we extend our polynomial attention framework to Physics-Informed Transformers, showing that polynomial activations can outperform softmax in this setting.

\section{Limitations}\label{sec:limitations}
While our work introduces polynomial activations that challenge the conventional understanding of softmax, there are several limitations to address. First, our theoretical framework is specifically developed for dot-product self-attention and may not directly generalize to other forms of attention mechanisms, such as additive attention or kernelized approximations. Extending our analysis to these variants could reveal additional nuances. Second, while our empirical evaluations span multiple architectures and tasks, they are limited to models with up to 100 million parameters due to resource constraints. The scalability of polynomial activations in large-scale transformers, especially those with billions of parameters, should be considered in the future.


\section{Conclusion}\label{sec:conclusion}

This work questioned whether transformer activations for attention must produce sparse probability distributions. We introduced a theoretical framework analyzing the Frobenius norm of the self-attention matrix, which suggests key scaling properties for activations in attention mechanisms. We proved that specific polynomial activations, which behave very differently from softmax, satisfy these properties. Through extensive experiments across a variety of transformer applications, we demonstrated that these alternative activations not only compete with but can outperform softmax, offering a new perspective on attention mechanisms in transformers.

\section{Acknowledgements}
Simon Lucey acknowledges support from the Australian Research Council (ARC) through the Discovery Project DP220103803.


\newpage
{
    \bibliographystyle{plain}
    \bibliography{main}
}


\appendix

\section{Appendix}
\subsection{Theoretical analysis}\label{app:theory_analysis}

\subsubsection{Proofs for theorems in section \ref{subsec;sm_implicit}}\label{app;proofs_sm}

In this section we give the proof of theorem \ref{thm:softmax_regularity}.

\begin{proof}[Proof of theorem \ref{thm:softmax_regularity}]

We will start by proving the first inequality in theorem \ref{thm:softmax_regularity}. Given a matrix $A = (a_{ij}) \in \R^{N\times N}$ we have that
\begin{equation}\label{app_eqn:softmax_acting_A}
    \mathbf{softmax}\bigg{(}
\begin{bmatrix}
a_{11} & \cdots & a_{1n}\\
\vdots & \vdots & \vdots \\
a_{n1} & \cdots & a_{nn}
\end{bmatrix}
\bigg{)}
=
\begin{bmatrix}
\frac{e^{a_{11}}}{\sum_{j=1}^ne^{a_{1j}}} & \cdots & 
\frac{e^{a_{1n}}}{\sum_{j=1}^ne^{a_{1j}}}\\
\vdots & \vdots & \vdots \\
\frac{e^{a_{n1}}}{\sum_{j=1}^ne^{a_{nj}}} & \cdots & 
\frac{e^{a_{nn}}}{\sum_{j=1}^ne^{a_{nj}}}.
\end{bmatrix}
\end{equation}
By definition of the Frobenius norm we then see that 
\begin{align}
    ||\mathbf{softmax}(A)||_F^2 &= \bigg{(}\frac{1}{\sum_{j=1}^ne^{a_{1j}}}\bigg{)}^2(e^{2a_{11}} + \cdots e^{2a_{11}}) + \cdots \\
    &\hspace{1cm} +
    \bigg{(}\frac{1}{\sum_{j=1}^ne^{a_{Nj}}}\bigg{)}^2(e^{2a_{N1}} + \cdots e^{2a_{NN}}) \\
    &\leq 
    \bigg{[}\bigg{(}\frac{1}{\sum_{j=1}^ne^{a_{1j}}}\bigg{)}(e^{a_{11}} + \cdots e^{a_{11}})\bigg{]}^2 + \cdots  \\
    &\hspace{1cm} +
    \bigg{[}\bigg{(}\frac{1}{\sum_{j=1}^ne^{a_{Nj}}}\bigg{)}(e^{a_{N1}} + \cdots e^{a_{NN}})\bigg{]}^2 \\
    &= 1 + \cdots + 1 \\
    &= N
    \end{align}
where the second inequality uses the fact that for non-negative numbers $a$ and $b$ we always have that $a^2 + b^2 \leq (a+b)^2$.

It then immediately follows that $||\mathbf{softmax}(A)||_F \leq \sqrt{N}$ and this proves the first inequality in the statement of theorem \ref{thm:softmax_regularity}.

We move on to prove the second inequality in the statement of theorem \ref{thm:softmax_regularity}. For this, let us write each entry of the matrix on the right of \eqref{app_eqn:softmax_acting_A} as follows:
\begin{equation}
    F_{kl} = \frac{e^{a_{kl}}}{\sum_{j=1}^Ne^{a_{kj}}}.
\end{equation}
By applying the chain rule we then have the following derivative formulas
\begin{align}
    \frac{\partial}{\partial x_{ij}}F_{ij} &= F_{ij} - F_{ij}^2 \\
    \frac{\partial}{\partial x_{ik}}F_{ij} &= -F_{ij}F_{ik} \text{ for any } k \neq j \\
    \frac{\partial}{\partial x_{kl}}F_{ij} &= 0 \text{ for any } k \neq i \text{ and } l \neq j.
\end{align}
We can then express the gradient as
\begin{equation}
\nabla\mathbf{softmax}(A) = 
\begin{bmatrix}
\nabla F_{11}\\
\vdots \\
\nabla F_{1N}\\
\nabla F_{21} \\
\vdots \\
\vdots \\
\nabla F_{NN}
\end{bmatrix}
\end{equation}
where
\begin{equation}
\nabla F_{ij} =
\begin{bmatrix}
-F_{ij}F_{i1}^2 & -F_{ij}F_{i2} & \cdots & F_{ij} - F_{ij}^2 & \cdots & F_{ij}F_{iN}.
\end{bmatrix}
  \end{equation}
From these computations we see that 
\begin{equation}
    ||\nabla \mathbf{softmax}(A)||_F^2 = ||\nabla F_{11}||_F^2 + \cdots + ||\nabla F_{NN}||_F^2.
\end{equation}
We will proceed by bounding each collection
$||\nabla F_{i1}||_F^2 + \cdots + ||\nabla F_{1N}||_F^2$ separately then add up all the bounds. We have
\begin{align}
||\nabla F_{i1}||_F^2 + \cdots + ||\nabla F_{1N}||_F^2 &= 
|F_{i1} - F_{i1}^2|^2 + |F_{i1}F_{i2}|^2 + \cdots + |F_{i1}F_{iN}|^2 \\
&\hspace{0.5cm} +  |F_{i2}F_{i1}|^2 + |F_{i2}-F_{i2}^2|^2 + \cdots + |F_{i2}F_{iN}|^2 \\
&\hspace{0.5cm} + \cdots\cdots\cdots\cdots + \\
&\hspace{0.5cm} +
|F_{iN}F_{i1}|^2 + |F_{iN}F_{i2}|^2 + \cdots + |F_{iN}-F_{iN}^2|^2 \\
&\leq (F_{i1})^2(|1 - F_{i1}| + |F_{i2}| + \cdots + |F_{iN}|)^2 \\
&\hspace{0.5cm}(F_{i2})^2(|F_{i1}| + |1-F_{i2}| + \cdots + |F_{iN}|)^2 \\
&\hspace{0.5cm} + \cdots\cdots\cdots\cdots + \\
&\hspace{0.5cm} + (F_{iN})^2(|F_{i1}| + |F_{i2}| + \cdots + |1-F_{iN}|)^2.
\end{align}
We then observe that since $F_{i1} + \cdots + F_{iN} = 1$ we have that
$1 - F_{ij} = 2(F_{i1}+\cdots + \widehat{F_{ij}}+ \cdots + F_{iN})$ where $\widehat{F_{ij}}$ means we don't include $F_{ij}$ in the sum. This means we get the bound
\begin{align}
 ||\nabla F_{i1}||_F^2 + \cdots + ||\nabla F_{1N}||_F^2 &\leq 
 4F_{i1}^2(\widehat{F_{i1}} + F_{i2}+\cdots + F_{iN}) \\
 &\hspace{0.5cm} + \cdots\cdots\cdots\cdots + \\
 &\hspace{0.5cm} +4F_{iN}^2(F_{i1}+ F_{i2}+\cdots + \widehat{F_{iN}} ) \\
 &\leq 4(F_{i1}^2 + \cdots F_{iN}^2) \\
 &= 4.
\end{align}
Putting all the bounds together for each of the terms $N$ terms
$||\nabla F_{i1}||_F^2 + \cdots + ||\nabla F_{1N}||_F^2$ we get
\begin{equation}
    ||\nabla \mathbf{softmax}(A)||_F^2 \leq 4N
\end{equation}
and this implies
\begin{equation}
    ||\nabla \mathbf{softmax}(A)||_F \leq 2\sqrt{N}.
\end{equation}
This finishes the proof of theorem \ref{thm:softmax_regularity}.

\end{proof}

\subsubsection{Proofs for theorems section  \ref{subsec:poly_acts}}\label{app;poly_proofs}

In this section we will give the proof of theorems \ref{thm:expectation} and \ref{thm:grad_expectation}.

\begin{proof}[Proof of theorem \ref{thm:expectation}]

We will split the matrix product $XQK^TX^T$ and think of it as the product of two matrices. Suppose $\mathbf{A} \in \R^{N\times D} \sim \mathcal{N}(0,\sigma_1^2)$, $\mathbf{B} \in \R^{D\times N}\sim \mathcal{N}(0,\sigma_2^2)$ and $\mathbf{C} = \mathbf{AB}$. Each element in the matrix $\mathbf{C}$ can be written as a product of a row of 
$\mathbf{A}$ with a column of $\mathbf{B}$. Since expectation is linear, we need to compute the expectation of each of these elements. We do the case of the entry $c_{11}$ which is the entry in $\mathbf{C}$ in the first row and first column. For the $p = 1$ case we can then compute

\begin{equation}
    \begin{aligned}
        \mathop{\mathbb{E}}( c_{11}^2 ) & = \mathop{\mathbb{E}}( (\sum_{i=1}^{D} a_{1i}b_{i1})^2 ) \\
        & = \mathop{\mathbb{E}}( \sum_{i=1}^{D} a_{1i}^2b_{i1}^2 + \sum_{i=1}^{D} \sum_{j=1,j\neq i}^{D}a_{1i}b_{i1}a_{1j}b_{j1} )\\
        & = \sum_{i=1}^{D} \mathop{\mathbb{E}}(a_{1i}^2)\mathop{\mathbb{E}}(b_{i1}^2) + \sum_{i=1}^{D} \sum_{j=1,j\neq i}^{D}\mathop{\mathbb{E}}(a_{1i})\mathop{\mathbb{E}}(b_{i1})\mathop{\mathbb{E}}(a_{1j})\mathop{\mathbb{E}}(b_{j1})\\
        & = D\sigma_1^2\sigma_2^2 + 0.
    \end{aligned}
\end{equation}

The Frobenius norm of the matrix $\mathbf{C}$ is just the sum of these values for all $N^2$ elements and this proves the $p=1$ case.

For the case that $p > 1$ we proceed in a similar way. The key observation is that odd powers, in the matrix expansion, will have expectaion $0$, so we need only consider the even powers. Therefore, suppose $\mathbf{C} = (\mathbf{A}\mathbf{B})^p$. We will compute the expectation of the first entry $c_11 \in \mathbf{C}$:

\begin{equation}
    \begin{aligned}
        \mathop{\mathbb{E}}( c_{11}^2 ) & = \mathop{\mathbb{E}}( (\sum_{i=1}^{D} a_{1i}b_{i1})^{2p} ) \\
        & = \mathop{\mathbb{E}}( \sum_{i=1}^{D} a_{1i}^{2p}b_{i1}^{2p} + \sum_{i=1}^{D} \sum_{j=1,j\neq i}^{D}a_{1i}^{2p-2}b_{i1}^{2p-2}a_{1j}^{2}b_{j1}^{2} + \cdots ).
    \end{aligned}
\end{equation}

Note that the first term only has a count of $D$ and the second term has a count of $D(D-1)$. Thus, we only need to consider the $\mathcal{O}(D^{p})$ term where all the components have a power of 2. The count is similar to choosing $p$ items from $D$,

\begin{equation}
    \begin{aligned}
        \mathop{\mathbb{E}}( c_{11}^2 ) & \approx \mathop{\mathbb{E}}(\sum_{\{i_1,\dots,i_p\} \in \{1,\dots,D\}} \prod_{k=1}^p a_{1,i_k}^{2}b_{i_k,1}^{2}) \\
         & = \left(\begin{array}{c} D \\ p \end{array} \right)\frac{2p!}{2^p} \sigma_1^{2p}\sigma_2^{2p} \\
         & = \frac{D!}{(D-p)!}\frac{2p!}{p!2^p} \sigma_1^{2p}\sigma_2^{2p} \\
         & = \frac{D!}{(D-p)!}\frac{2p!}{2p!!} \sigma_1^{2p}\sigma_2^{2p} \\
         & = \frac{D!}{(D-p)!}(2p-1)!! \sigma_1^{2p}\sigma_2^{2p}.
    \end{aligned}
\end{equation}

$\frac{D!}{(D-p)!}$ can always be bounded above by $D^{p}$, so the expectation can be upper bounded by $D^{p}(2p-1)!! \sigma_1^{2p}\sigma_2^{2p}$ and thus we get a quantity of the form
$\mathcal{O}(N)$.

\end{proof}


\begin{proof}[Proof of theorem \ref{thm:grad_expectation}]
    We will do the $p =1$ case first. We proceed similar to the proof of Theorem \ref{thm:expectation}.
\begin{align}
\mathop{\mathbb{E}}(\|\frac{\partial XQK^TX^T}{\partial Q}\|^2_F) 
&= \sum_{i=1}^{N} \sum_{j=1}^{N} \mathop{\mathbb{E}}(   |\frac{\partial x_{i}^T Q K^Tx_{j}}{\partial Q}|^2_F ) \\
&= \sum_{i=1}^{N} \sum_{j=1}^{N} \mathop{\mathbb{E}}(   \|x_{i}x_{j}^TK \|^2_F ) \\
&= \sum_{i=1}^{N} \sum_{j=1}^{N} \mathop{\mathbb{E}}(   \sum_{k=1}^{D} \sum_{l=1}^{d} (x_{ik}\sum_{m=1}^{D} x_{jm}k_{ml})^2 ) \\
&= \sum_{i=1}^{N} \sum_{j=1}^{N} \mathop{\mathbb{E}}(   \sum_{k=1}^{D} \sum_{l=1}^{d} x_{ik}^2 (\sum_{m=1}^{D} x_{jm}k_{ml})^2 ) \\
&= \sum_{i=1}^{N} \sum_{j=1}^{N} \mathop{\mathbb{E}}(   \sum_{k=1}^{D} \sum_{l=1}^{d} x_{ik}^2 (\sum_{m=1}^{D} x_{jm}^2k_{ml}^2+\sum_{m=1}^{D}\sum_{n=1,n\neq m}^{D}x_{jm}k_{ml}x_{jn}k_{nl}) ) \\
        &= \sum_{i=1}^{N} \sum_{j=1}^{N} \sum_{k=1}^{D} \sum_{l=1}^{d} (\sum_{m=1}^{D} \mathop{\mathbb{E}} (x_{ik}^2x_{jm}^2k_{ml}^2) \\ 
        &\hspace{1cm} 
        +\sum_{m=1}^{D}\sum_{n=1,n\neq m}^{D} \mathop{\mathbb{E}} (x_{ik}^2x_{jm}k_{ml}x_{jn}k_{nl}) ) \\
        &= \sum_{i=1}^{N} \sum_{j=1}^{N} \sum_{k=1}^{D} \sum_{l=1}^{d} (\sum_{m=1}^{D} \mathop{\mathbb{E}} (x_{ik}^2x_{jm}^2k_{ml}^2) + 0) \\
        &= \sum_{i=1}^{N} \sum_{j=1}^{N} \sum_{k=1}^{D} \sum_{l=1}^{d}\sum_{m=1}^{D} \mathop{\mathbb{E}} (x_{ik}^2x_{jm}^2k_{ml}^2) \\
        &= \sum_{i=1}^{N} \sum_{k=1}^{D} \sum_{l=1}^{d} \mathop{\mathbb{E}} (x_{ik}^2x_{ik}^2k_{kl}^2) \\
        &\hspace{1cm}
        + \sum_{i=1}^{N} \sum_{j=1,j\neq i}^{N} \sum_{k=1}^{D} \sum_{l=1}^{d}\sum_{m=1,m\neq k}^{D} \mathop{\mathbb{E}} (x_{ik}^2x_{jm}^2k_{ml}^2) \\
        &= NDd3\sigma_x^4\sigma_w^2 + N(N-1)D(D-1)d\sigma_x^4\sigma_w^2 \\
        &\approx N^2D^2d\sigma_x^4\sigma_w^2.
\end{align}
When $p > 1$ we can proceed in a similar way.
\begin{align}
\mathop{\mathbb{E}}(   \|\frac{\partial(XQK^TX^T)^p}{\partial Q}\|^2_F ) &= \sum_{i=1}^{N} \sum_{j=1}^{N} \mathop{\mathbb{E}}(   \|\frac{(\partial x_{i}^TQK^Tx_{j})^p}{\partial Q}\|^2_F ) \\
        &= \sum_{i=1}^{N} \sum_{j=1}^{N} \mathop{\mathbb{E}}(   \|p(x_{i}^TQK^Tx_{j})^{p-1}\frac{\partial x_{i}^TQK^Tx_{j}}{\partial Q}\|^2_F ) \\
        &= \sum_{i=1}^{N} \sum_{j=1}^{N} \mathop{\mathbb{E}}(   \|p(x_{i}^TQK^Tx_{j})^{p-1} x_{i}x_{j}^TK \|^2_F ) \\
        &= \sum_{i=1}^{N} \sum_{j=1}^{N} \mathop{\mathbb{E}}(  p^2(x_{i}^TQK^Tx_{j})^{2p-2} \sum_{k=1}^{D} \sum_{l=1}^{d} (x_{ik}\sum_{m=1}^{D} x_{jm}k_{ml})^2 ).
\end{align}
We know that
\begin{align}
(x_{i}^TQK^Tx_{j})^{2p-2} &= (\sum_{l=1}^{d}( (\sum_{k=1}^{D} x_{ik}q_{kl})\cdot (\sum_{m=1}^{D} x_{jm}k_{ml}) ) )^{2p-2} \\
&= (\sum_{l=1}^{d}\sum_{k=1}^{D}\sum_{m=1}^{D} x_{ik}q_{kl}x_{jm}k_{ml})^{2p-2} \\
        &= (\sum_{k=1}^{D}\sum_{m=1}^{D} x_{ik}x_{jm}\sum_{l=1}^{d}q_{kl}k_{ml})^{2p-2} \\
        &= (\sum_{k=1}^{D}\sum_{m=1}^{D} x_{ik}x_{jm}a_{km})^{2p-2},  
        \end{align}
where $a_{km} = \sum_{l=1}^{d}q_{kl}k_{ml}$. Let $z_{ij}=\sum_{k=1}^{D}\sum_{m=1}^{D} x_{ik}x_{jm}a_{km}$ Thus we have
\begin{align}
\mathop{\mathbb{E}}(   \|\frac{\partial(XQK^TX^T)^p}{\partial Q}\|^2_F ) &= p^2 \sum_{i=1}^{N} \sum_{j=1}^{N} \mathop{\mathbb{E}}(  z_{ij}^{2p-2} \sum_{k=1}^{D} \sum_{l=1}^{d} (x_{ik}\sum_{m=1}^{D} x_{jm}k_{ml})^2 ) \\
        &= \sum_{i=1}^{N} \sum_{j=1}^{N} \sum_{k=1}^{D} \sum_{l=1}^{d} (\sum_{m=1}^{D} \mathop{\mathbb{E}} (z_{ij}^{2p-2}x_{ik}^2x_{jm}^2k_{ml}^2) \\ 
        & +\sum_{m=1}^{D}\sum_{n=1,n\neq m}^{D} \mathop{\mathbb{E}} (z_{ij}^{2p-2}x_{ik}^2x_{jm}k_{ml}x_{jn}k_{nl}) ) \\
        &= \sum_{i=1}^{N} \sum_{j=1}^{N} \sum_{k=1}^{D} \sum_{l=1}^{d} (\sum_{m=1}^{D} \mathop{\mathbb{E}} (z_{ij}^{2p-2}x_{ik}^2x_{jm}^2k_{ml}^2) \\
        & +\sum_{m=1}^{D}\sum_{n=1,n\neq m}^{D} \mathop{\mathbb{E}} (z_{ij}^{2p-3}x_{ik}^2x_{jm}^2k_{ml}^2x_{jn}^2k_{nl}^2) ) \\
        &\approx N^2Dd (D  (D^{2p-2}d^{p-1}(2p-3)!!\sigma_x^{4p}\sigma_w^{4p-2} ) + 0 \\
        &=  N^2 D^{2p}d^{p}(2p-3)!!\sigma_x^{4p}\sigma_w^{4p-2} 
\end{align}
showing that we can bound the gradient by a quantity of the form
$\mathcal{O}(N)$
and the proof is complete.

\end{proof}

\subsection{Experiments}\label{app:exps}

\subsubsection{Hardware}

The vision transformer experiments in \cref{subsec:image_classi}, the object detection and instance segmentation experiments in \cref{subsec:object_detec} and the Nystr\"omformer experiments from \cref{subsec:nlp_exps} were all carried out on Nvidia A100 GPUs. The physics informed transformer experiments in \cref{sec:pinns} were carried out on a Nvidia A6000 GPU.

\subsubsection{Experimental hyperparameters}\label{app:exps_hyps}

\paragraph{Vision transformers in \cref{subsec:image_classi}.} In \cref{subsec:image_classi} we tested four different vision transformers, ViT-B \cite{dosovitskiy2020image}, DeiT-B \cite{touvron2021training}, Swin-B \cite{liu2021swin} and XCiT-M \cite{ali2021xcit}, with the activations softmax, $x^3$ + fixed scaling,  $x^3$ + dynamic scaling and $x^3$.  The training strategy follow the exact strategy used in each of the original papers, we used the Timm \cite{rw2019timm} libraries to train our models.

\paragraph{Object detection and instance segmentation in \cref{subsec:object_detec}.} Each of the models are trained for 36 epochs using the AdamW optimizer with learning rate of $10^{-4}$, $0.05$ weight decay and batch size of $16$.

\paragraph{Nystr\"{o}mformer in \cref{subsec:nlp_exps}.} For the Nystr\"{o}mformer experiments we used the exact same training strategy form the original paper \cite{xiong2021nystromformer}. Model were trained using the GitHub provided by the authors of \cite{xiong2021nystromformer}.

\subsubsection{Frobenius norm computations}\label{app:frob_norm_complete}

In \cref{subsec:testing_theory} we showed plots of the Frobenius norm of the self-attention matrix and for the Jacobian of the self-attention matrix for softmax, $\frac{1}{14}x^3$, and $x^3$. This was done for a ViT-Tiny architecture on the Tiny-ImageNet dataset. \cref{fig:tiny_frob_12} shows the plots of the Frobenius norm of the self-attention matrix for the ViT-Tiny architecture, during training, for all layers averaged over the heads within each layer. \cref{fig:jac_tiny_frob_12} shows the Frobenius norm of the Jacobian of the self-attention matrix during training for each layer, averaged over the total number of heads within each layer.

\begin{figure}[ht!]
    \centering
    \includegraphics[width=1.\linewidth]
    {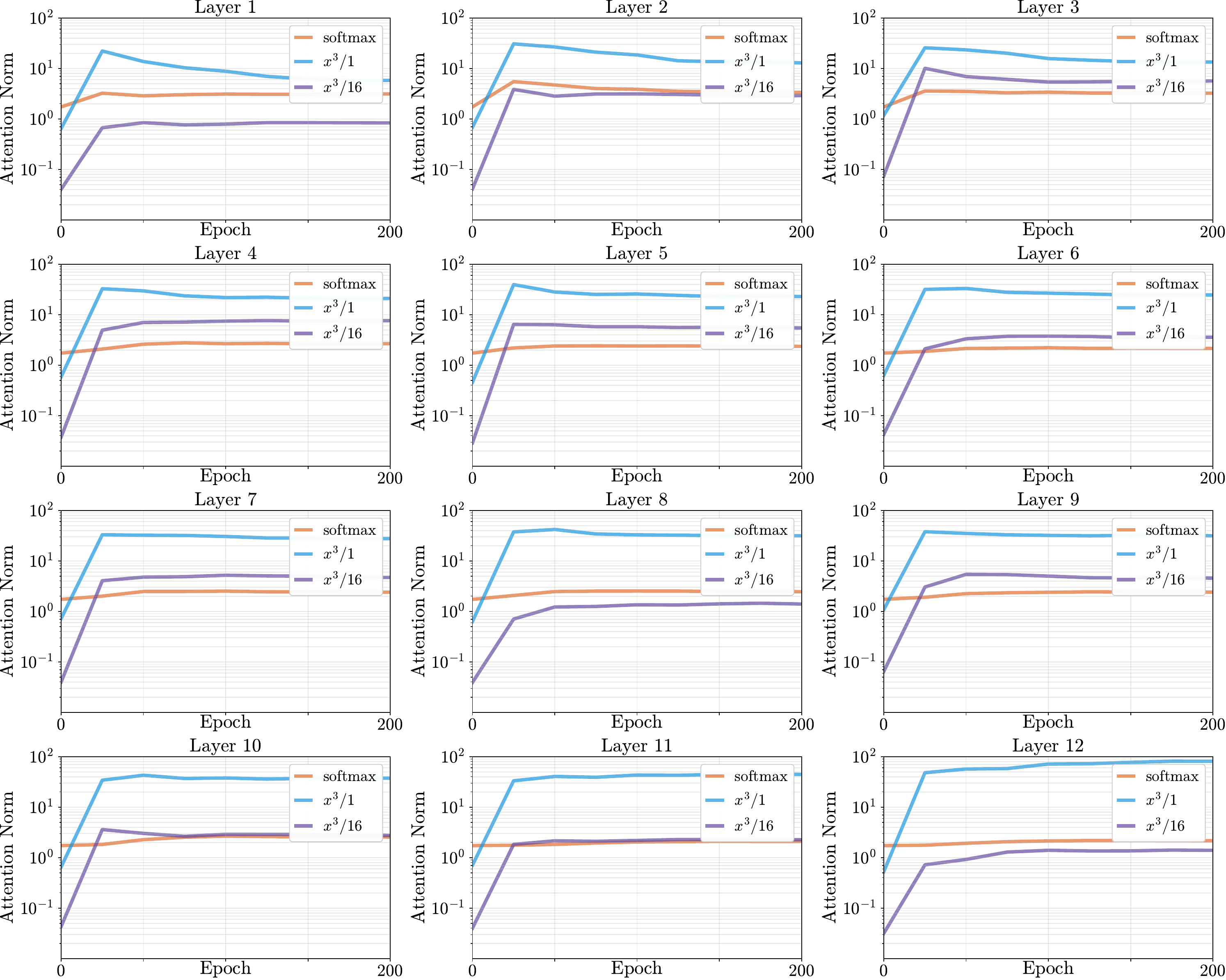}
    \caption{Frobenius norm of self-attention matrix for softmax, $\frac{1}{16}x^3$ and $x^3$ on ViT-Tiny during training on the Tiny-ImageNet dataset (zoom in for better viewing).}
    \label{fig:tiny_frob_12}
\end{figure}

\begin{figure}[ht!]
    \centering
    \includegraphics[width=1.\linewidth]
    {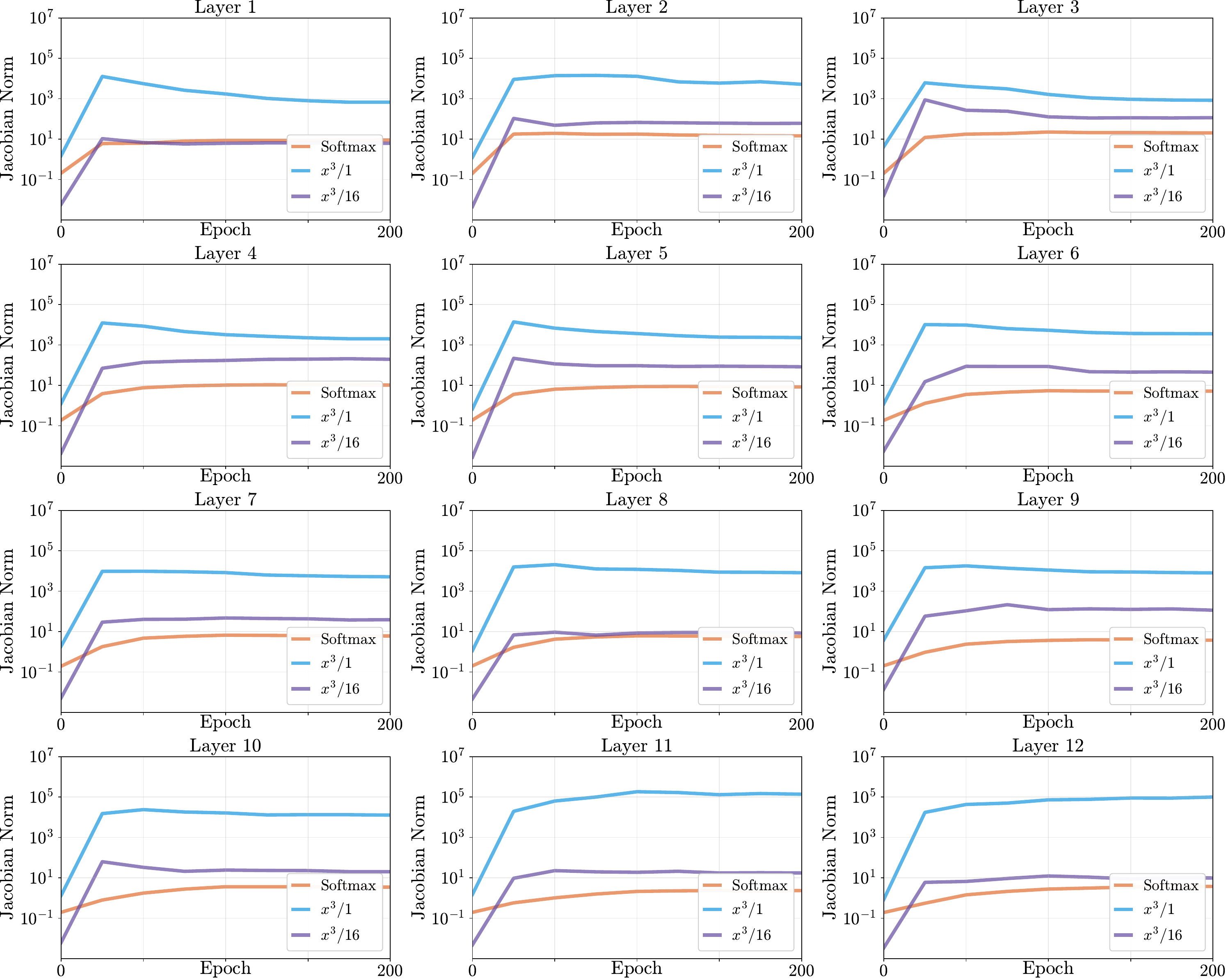}
    \caption{Frobenius norm of the Jacobian of the self-attention matrix for softmax, $\frac{1}{16}x^3$ and $x^3$ on ViT-Tiny during training on the Tiny-ImageNet dataset (zoom in for better viewing).}
    \label{fig:jac_tiny_frob_12}
\end{figure}

\subsubsection{Discussion on Linear attention and other polynomials on ViTs}\label{app:other_polys}

In the experiments \cref{sec:exps} we showed our insights using the polynomial $x^3$ as this was a non-linear polynomial that did not satisfy the key properties that softmax did. In this section we will compare softmax with other polynomials that we also found to perform well.

The results for ViTs is shown in \cref{tab:vits_all_poly}. As we can see from that table each polynomial does far better when scaled using the theory from \cref{sec:theory}. We note that when we tried to train polynomial higher than order $6$ they did not train well. On inspecting the weights we found that several were very close to zero leading to a gradient vanishing problem. We hypothesize that this is because the transformers were randomly initialized with weights about zero. Thus taking such a weight and applying a polynomial of the form $x^k$ with $k \geq 6$ would make those weights orders of magnitude smaller, making it difficult to train after some point. 

\begin{table}[!ht]
    \caption{Comparisons of pre-training models with different activation functions on ImageNet-1k. We report top-1 classification accuracy (\%).}
    \centering
    \scalebox{1.0}{
    \begin{tabular}{ccccc}
        \multirow{2}{*}{} & \multicolumn{4}{c}{Models} \\
        & ViT-B & DeiT-B & Swin-B & XciT-M \\
        \midrule
        softmax & 80.3 & 81.5 & 83.5 & 82.7 \\
        \midrule
        $x$ + fixed scale & 78.4 & 79.4 & 80.6 & 80.2 \\
        \midrule
        $x$ + dynamic scale & 78.7 & 79.5 & 80.7 & 80.4 \\
        \midrule
        $x$ & 74.1 & 77.9 & 78.3 & 78.1 \\
        \midrule
        $x^2$ + fixed scale & 80.1 & 81.5 & 83.4 & 82.5 \\
        \midrule
        $x^2$ + dynamic scale & 80.3 & 81.6 & 83.5 & 82.7 \\
        \midrule
        $x^2$ & 77.8 & 78.2 & 79.8 & 79.4 \\
        \midrule
         $x^4$ + fixed scale & 80.3 & 81.5 & 83.7 & 82.7 \\
        \midrule
        $x^4$ + dynamic scale & 80.3 & 81.6 & 83.7 & 82.8 \\
        \midrule
        $x^4$ & 77.9 & 78.6 & 79.9 & 79.6 \\
        \midrule
           $x^5$ + fixed scale & 80.3 & 81.4 & 83.4 & 82.5 \\
        \midrule
        $x^5$ + dynamic scale & 80.3 & 81.5 & 83.4 & 82.6 \\
        \midrule
        $x^5$ & 77.7 & 78.0 & 79.4 & 79.5 \\
        \midrule
    \end{tabular}
    }
    \label{tab:vits_all_poly}
\end{table}



\subsubsection{Physics informed transformers}\label{sec:pinns}

In \cite{zhao2023pinnsformer} transformers for modeling partial differential equations (PDEs) were introduced and shown to yield superior performance on a variety of PDEs. In this section we compare softmax with the polynomial activation we have been using in the previous section on two common PDEs within the literature, namely the convection PDE and the 1d-Reaction PDE. 

\paragraph{Convection PDE:} This is a 1-dimensional PDE that models transport phenomena. The PDE system is defined by the following equations
\begin{align}
    \frac{\partial u}{\partial t} 
    + \beta\frac{\partial u}{\partial x} = 0, \text{ for } 
    x \in [0, 2\pi], t \in [0,1] \\
IC: u(x,0) = sin(x), BC: u(0, t) = u(2\pi, t)    
\end{align}
where $\beta$ is the convection coefficient and is set as 
$\beta = 50$. IC stands for initial condition and BC for boundary condition. For more details on this system consult \cite{zhao2023pinnsformer}.

\paragraph{1d-reaction PDE:} The 1d-reaction equation is commonly used in modeling chemical reactions. It is defined by the following system.
\begin{align}
 \frac{\partial u}{\partial t} - \rho u(1-u) = 0  \text{ for } 
    x \in [0, 2\pi], t \in [0,1] \\
IC: u(x,0) = exp\big{(}\frac{(x-\pi)^2}{2(\pi/4)^2} \big{)}, 
BC: u(0,t) = u(2\pi, t)
\end{align}
where $\rho$ is known as the reaction coefficient and is set as $\rho=5$. For more details on this system consult \cite{zhao2023pinnsformer}.

As in \cite{zhao2023pinnsformer} we considered a physics informed transformer (PINNsformer) on the above two PDEs. We compared softmax with $x^3$ + fixed scaling, $x^3$ + dynamic scaling, where the scale was initialized to $\frac{1}{\sqrt{51}}$ as the input sequence length is $51$, and $x^3$. As in \cite{zhao2023pinnsformer} we computed the training loss, and for testing the relative mean absolute error (rMAE), and the relative mean squared error (rMSE). \cref{tab:convection_1d_reaction} shows the results from which it can be seen that both $x^3$ + fixed scaling and $x^3$ + dynamic scaling are competitive with the original softmax.

\begin{table*}[!ht]
\caption{Comparison of different activations on PINNsformer for  Convection and 1D-Reaction problems.}
\centering
\scalebox{0.9}{
\begin{tabular}{lcccccc}
\hline
\rowcolor{mygrey}
\multirow{2}{*}{} & \multicolumn{3}{c|}{Convection PDE} & \multicolumn{3}{c}{1D-Reaction PDE} \\ 
\cline{2-7}
\rowcolor{mygrey}
& Loss & rMAE & rMSE & Loss & rMAE & rMSE \\ 
\hline
softmax & 5.4e-5 & 4.1e-2 & 4.4e-2 & 3.8e-6 & 3.1e-2 & 5.8e-2 \\ 
$x^3$ + fixed scale & \cellcolor{myorange}4.9e-5 & \cellcolor{myorange}3.9e-2 & \cellcolor{myorange}4.3e-2 & \cellcolor{myorange}3.1e-6 
& \cellcolor{myorange}3.0e-2 & \cellcolor{myorange}5.6e-2 \\ 
$x^3$ + dynamic scale & \cellcolor{mygreen}4.1e-5 & \cellcolor{mygreen}3.5e-2 & \cellcolor{mygreen}4.0e-2 & \cellcolor{mygreen}2.8e-6 
& \cellcolor{mygreen}2.7e-2 & \cellcolor{mygreen}5.4e-2 \\ 
$x^3$ & 7.2e-5 & 9.8e-2 & 1.1e-1 & 6.2e-6 & 6.1e-2 & 9.8e-2 \\ 
\hline
\end{tabular}
}
\label{tab:convection_1d_reaction}
\end{table*}

The training of the physics informed transformers followed the exact recipe in the original paper 
\cite{zhao2023pinnsformer} with their data and training schemes available from their GitHub \cite{pinnsformer_github}. The evaluation metrics are defined by

\begin{equation}
\text{rMAE} = \frac{\sum_{n=1}^N \left| \hat{u}(x_n, t_n) - u(x_n, t_n) \right|}{\sum_{n=1}^{N} \left| u(x_n, t_n) \right|}
\end{equation}

\begin{equation}
\text{rMSE} = \sqrt{\frac{\sum_{n=1}^N \left| \hat{u}(x_n, t_n) - u(x_n, t_n) \right|^2}{\sum_{n=1}^{N} \left| u(x_n, t_n) \right|^2}}
\end{equation}

where \( N \) is the number of testing points, \( \hat{u} \) is the neural network approximation, and \( u \) is the ground truth.




\end{document}